\newcommand{\Bmath}[1]{\mbox{\bf {#1}}}
\def\c{{\Bmath c}}
\newtheorem{theorem}{Theorem}[section]
\newtheorem{corollary}[theorem]{Corollary}
\newtheorem{proposition}[theorem]{Proposition}   
\newtheorem{lemma}[theorem]{Lemma}
\theoremstyle{definition}
\newtheorem{definition}[theorem]{Definition}
\newtheorem{remark}[theorem]{Remark}
\numberwithin{equation}{section}
\newcommand{\norm}[1]{\left\lVert#1\right\rVert}
\begin{document}

\title{Symplectic Representations of Legendre Dynamics}
\author{Robert Simon Fong}
\email{rsfong@g.ecc.u-tokyo.ac.jp}
\address{International Research Center for Neurointelligence, The University of Tokyo, Tokyo, 113-0033, Japan}

\author{Gouhei Tanaka}
\email{gtanaka@nitech.ac.jp}
\address{Department of Computer Science, Graduate School of Engineering, 
        Nagoya Institute of Technology, Nagoya, 466-8555, Japan}
       
\author{Kazuyuki Aihara} 
       \email{kaihara@g.ecc.u-tokyo.ac.jp}
\address{International Research Center for Neurointelligence, The University of Tokyo, Tokyo, 113-0033, Japan}
       

\begin{abstract}

  {
Modern learning systems act on internal representations of data, yet how these representations encode underlying physical or statistical structure is often left implicit. In physics, symplecticity keeps Hamiltonian systems faithful to their phase-space geometry. Recent learning methods impose such geometric structure either in the dynamics or through training losses. Here we ask a different question: what would it mean for the representation itself to obey a symplectic conservation law?

We pose this representation-level constraint through Legendre duality: the relation $p = d\psi(q)$ between primal and dual coordinates, which in exponential family models is the information-geometric pairing of natural and expectation parameters. We formalize Legendre dynamics as stochastic processes whose trajectories remain on Legendre graphs, where the evolving primal-dual parameters stay Legendre dual. We show that this class includes linear time-invariant Gaussian process regression and Ornstein-Uhlenbeck dynamics.

Geometrically, we characterize the symplectomorphisms of cotangent bundles that preserve all Legendre graphs. We show that these maps are exactly cotangent lifts of base diffeomorphisms followed by exact fibre translations. This gives an explicit normal form for Legendre-preserving representation updates.

Dynamically, we prove that the normal form is realized by Hamiltonians that are at most linear in the momentum. This realization principle is used to construct linear and nonlinear Hamiltonian Symplectic Reservoirs (SR) whose recurrent updates preserve Legendre graphs by construction. This is the only normal form that preserves Legendre duality, so the architecture follows from the invariant. Numerical experiments confirm the normal-form identities and distinguish Legendre preserving Hamiltonian SRs from generic symplectic and standard reservoir baselines.

}
\end{abstract}

\maketitle

\smallskip
\noindent \textbf{Keywords.} 
  Information Geometry, Symplectic Geometry, Representation Learning, Hamiltonian Mechanics, Reservoir Computing

\section{Introduction}
 
In machine learning, inputs live in a structured space and are mapped through a chain of computation into outputs. Inputs are first embedded into a feature space of representations (such as $\mathbb{R}^n$), and outputs are then formed on top of these representations. This raises a basic question: what makes a representation ``good''? A broad line of work views good representations as those that encode the task-relevant invariants and equivalence relations, such as the metric structure and symmetries  \cite{bengio2013representation,anselmi2016invariance}. Metric learning is one example of this kind, making the geometry of a chosen distance explicit  \cite{kulis2013metric}. These examples rest on a common premise that input spaces are rarely just bare sets: they carry \textbf{intrinsic structure}, {such as} metrics, group actions, {and/or} invariance relations, and a representation earns its keep by preserving it. 

This viewpoint is familiar in vision: {convolutional layers are good image representations because they are discrete shift-equivariant. Together with pooling and data augmentation this yields approximate translation invariance, where group-equivariant extensions further build in rotations and other symmetries  \cite{cohen2016group,kayhan2020translation}. This improves sample efficiency and reduces spurious coordinate dependence.}  {By generalizing} this viewpoint, we regard a representation as \textbf{``good''} if it preserves the structure of the input space in the feature space.
We summarize the above as a \textbf{chain of structure-preserving maps} {as illustrated in the following diagram}:
\[
\boxed{ \left\{ \begin{array}{c} \text{input space}, \\ \text{structure} \end{array} \right\} \xrightarrow[\text{}]{\text{representation}} 
\left\{\begin{array}{c} \text{feature space}, \\ \text{structure} \end{array} \right\} }
\xrightarrow[\text{}]{\text{readout}} 
\left\{\begin{array}{c} \text{output space}, \\ \text{structure} \end{array} \right\} 
\].

Our focus is the representation stage of the chain, shown in the boxed region of the diagram. In particular, we study when a learned representation preserves the intrinsic structure attached to the domain one wishes to model. 


We formalize this structure-preservation viewpoint as follows. A structured space $\left(X,S_X\right)$  is defined by the underlying set $X$ and a family of relations $S_X=\left\{\sim_X^\alpha\right\}_{\alpha\in A}$  {indexed by set $A$} (e.g., metric closeness at given radii, symmetry-equivalence, membership in a constraint set). A map $R:X\rightarrow Z$ {is said to} preserve structure if, for all $\alpha \in A$ and all $x_1, x_2 \in X$, 
\[
x_1 \sim_X^\alpha x_2 \implies R(x_1) \sim_Z^\alpha R(x_2).
\]
This forward preservation condition is the notion of structure preservation
used throughout the paper. Intuitively, a good representation preserves the structure of the tuple $\{\text{input space},\text{structure}\}$.

Our setting is inference on exponential-family models, whose parameter spaces admit dual coordinate systems generated by a Legendre potential $\psi$  \cite{amari2000methods}. Concretely, we work on the parameter manifold $Q$ with Legendre duality encoded by the Legendre graph:
\begin{align}
    \label{eqn:intro_legendre}
    L_\psi = \left\{(q,p) \in T^*Q : p = d\psi(q)\right\}.
\end{align}

We call parameter evolutions that remain on (possibly drifting) Legendre graphs \textbf{Legendre dynamics} (LD). {We show that} this class includes linear time-invariant Gaussian process regression (LTI-GPR) and Ornstein-Uhlenbeck (OU) dynamics, where their Gaussian information-form updates can be expressed so that the parameters $\left(q_k,p_k\right) = \left(q_k,{{d}}\psi(q_k)\right)$ remain dual at each step $k$, with the potential $\psi_k$ allowed to change. To the best of our knowledge, this explicit ``Legendre-graph-preserving'' viewpoint has received little {direct} attention in previous work and will be central in what follows.

To be concrete, the raw inputs are left-infinite sequences $u=\{u_k\}_{k\in\mathbb Z_-}$. The structured representation considered in this paper is a cotangent-bundle state
\[
x_k=(q_k,p_k)\in T^*Q.
\]
The preservation results below apply to this representation: if $x_k\in L_{\psi_k}$, then the Hamiltonian symplectic reservoir update maps $x_k$ to a point $x_{k+1}\in L_{\psi_{k+1}}$. A downstream task readout
\[
h:T^*Q\to Y
\]
may then be trained on top of $x_k$. The preservation result concerns the recurrent state, so a readout carries structure forward only if it is itself structure preserving.


One of the machine-learning frameworks suited to processing time-dependent data streams is Reservoir Computing (RC), which employs a fixed reservoir for input transformation together with a trainable readout  \cite{Lukoservicius2009,tanaka2019recent}. Structure-preserving neural networks and physical substrates can be used to build reservoirs that approximate dynamical systems while respecting prescribed conservation laws. 

 {
Motivated by Hamiltonian mechanics, we consider \textbf{Hamiltonian Symplectic Reservoirs} (Hamiltonian SR): input-driven recurrent systems on $T^*Q$ whose updates are Hamiltonian symplectomorphisms. This motivates our representation-level question:


\begin{center}
    	\textit{What does symplectic conservation mean at the representational level?}
\end{center}

Our answer is geometric. For Legendre dynamics, the structure to be preserved is the tuple
\[
\{\text{Parameter space }Q,\ \text{Legendre duality}\},
\]
which is encoded by the family of Legendre graphs of Equation~\eqref{eqn:intro_legendre}.
The Hamiltonian SR updates constructed in this paper are Legendre preserving as they map each Legendre graph to another Legendre graph. Therefore, if the current cotangent-bundle representation lies on $L_{\psi_k}$, then one update sends it to some $L_{\psi_{k+1}}$. The potential may change, but the primal-dual relation is preserved. By induction, a trajectory initialized on a Legendre graph thus remains within the family of Legendre graphs. This provides a faithful, invariance-preserving representation for Legendre dynamics such as LTI-GPR and OU. In this paper, the main normal-form theorem characterizes the Legendre preserving symplectomorphisms, and the Hamiltonian SR constructions realize this class of structure preserving representations.

Modelling Legendre dynamics has two practical payoffs. On the statistical level, Legendre duality encodes the natural/expectation parameter pairing and the Fisher-geometric structure of an exponential family. Preserving the Legendre relation keeps uncertainty, Fisher geometry, and dual coordinates mutually consistent along the trajectory
\[
(q_k,p_k=d\psi_k(q_k)).
\]
On the dynamical level, the same Legendre relation is the intrinsic cotangent-bundle structure underlying Hamiltonian and thermodynamic descriptions, where potentials generate conjugate variables. Preserving Legendre graphs therefore gives a representation-level way to track how potentials, curvature, and dual coordinates evolve under input-driven dynamics.

This places our approach in a distinct part of the surrounding landscape. Physics-informed neural networks enforce conservation laws through losses on the outputs  \cite{raissi2019physics}, while symplectic, Hamiltonian, and Poisson learning methods impose geometric structure on the dynamics  \cite{chen2019symplectic, jin2020sympnets, eldred2024lie}. Our viewpoint is complementary to both: we place the geometric structure in the recurrent representation map itself, designing the update so that it is symplectic and Legendre preserving. The recurrent core thus aligns the internal representation with the invariant (Legendre duality) of the problem, in analogy with how convolutional layers align feature maps with spatial symmetries in vision. Within RC  \cite{Jaeger2001}, this gives the reservoir a specific geometric role instead of treating it as an unstructured dynamical system.

 }


The formulation of this paper is parametrization-invariant or coordinate-free. That is, all statements are valid under reparameterization. This aligns with the physics principle that laws should be independent of coordinates, and the representation of Legendre dynamics therefore brings us closer to the intrinsic structure of the problem.

\subsection{Related Work}

Representation learning for time series and streaming data has been explored across many model classes, including RNNs  \cite{trirat2024universal}, yet the theoretical analysis of their representation capacity remains limited. Our aim is to deepen the theoretical understanding of the representational capacity of a special class of RNNs that exhibit dynamic state evolution while explicitly preserving structure.


Among various RNN-based RC models  \cite{Jaeger2001,Maass2002,Tino2001,Lukoservicius2009}, the Echo State Network (ESN) is the simplest and most widely studied  \cite{Jaeger2001,Jaeger2002,jaeger2002a,Jaeger2004}. The universal approximation capability of ESNs has been demonstrated in a series of papers, showing existentially that ESNs can approximate any time-invariant fading-memory dynamic filter in a variety of settings   \cite{Grigoryeva2018, grigoryeva2018echo, gonon2019reservoir}. The trade-off between memory capacity and nonlinear expressiveness in the reservoir was pointed out  \cite{dambre2012information}, and further studied in  \cite{inubushi2017reservoir}, which adjusted the degree of reservoir nonlinearity to reconcile the two. Typical ESNs use a randomly connected RNN as the reservoir, which inevitably generates fluctuations in reservoir's expressiveness and their task performance. One way to mitigate this issue is to use a structured reservoir, as in Euler State Networks (EuSN), which enforce antisymmetric Euler-discretized dynamics to obtain near-orthogonal recurrent matrices for memory control and stability  \cite{gallicchio2024euler}.


Linear ESNs have recently been studied extensively owing to their mathematical tractability. In statistical time-series modelling, they have been shown to have representational capacity equivalent to that of a nonlinear vector autoregression (NVAR)  \cite{bollt2021explaining}.
From the kernel viewpoint, a linear ESN can be interpreted as a kernel machine operating in a temporal feature space, where inputs are mapped by a linear reservoir  \cite{Tino_JMLR_2020}. The same work also showed that a linear simple cycle reservoir (SCR), a ring-topology neural network with minimal complexity  \cite{rodan2010minimum}, can implement deep-memory dynamic kernels. A subsequent study revealed that, at the edge of stability, the kernel representation of SCR replicates the Fourier decomposition, providing a natural link between reservoir-based signal processing and classical signal-processing models  \cite{fong2025linear}. Moreover, it was shown that the SCRs, together with a trained linear readout, are universal approximators of time-invariant dynamic filters with fading memory over $\mathbb{C}$ and $\mathbb{R}$ respectively  \cite{li2023simple, fong2024universality}.


 {
Several recent works learn Hamiltonian, symplectic, or Poisson structure directly. Symplectic recurrent neural networks unroll neural Hamiltonian dynamics using symplectic integration, while SympNets construct
feedforward modules that approximate symplectic maps
 \cite{chen2019symplectic,jin2020sympnets}. Lie--Poisson neural networks and coupled Lie--Poisson neural networks preserve Poisson brackets, Casimirs, and momentum-type invariants in symmetry-reduced Hamiltonian systems  \cite{eldred2024lie,eldred2025clpnets}. Other structure-preserving learning methods recover Hamiltonian or Poisson vector fields using kernel methods, or learn variational/Lagrangian systems using Gaussian processes
 \cite{hu2024structure,hu2025global,
offen2024machine}. Related geometric approaches also use Lagrangian submanifolds to enforce symmetry or momentum preservation in Hamiltonian learning problems  \cite{vaquero2024symmetry}.

These works preserve symplectic, Poisson, Hamiltonian, variational, or momentum-type structures. The structural invariant studied here is Legendre duality, encoded by the Legendre graph $L_\psi=\{(q,p):p=d\psi(q)\}$ of Equation~\eqref{eqn:intro_legendre}. This invariant is intrinsic to stochastic parameter dynamics such as LTI-GPR and OU, where it records the coupling between primal and dual coordinates. The normal form from our main theorem
\[
\tau_{d\chi}\circ f^\sharp
\]
thus  characterizes the cotangent-bundle updates that are Legendre preserving, and the Hamiltonian SR constructions below realize this class recurrently. This provides a representation level guarantee for the geometry of primal-dual parameter dynamics.


}

\subsection{Contributions}

The development proceeds in three steps. We first ask which symplectic maps preserve Legendre duality, a question of geometry. We then ask which Hamiltonians generate those maps, a question of dynamics. We finally construct the reservoirs that realize them, a question of neural networks. The contributions of this paper and its contents are outlined as follows:

\begin{itemize}
  \item In Section~\ref{sec:domain}, we introduce the notion of Legendre dynamics (LD): stochastic processes whose parameter trajectories evolve on (possibly time varying) Legendre graphs $L_{\psi_k} = \{(q,p) : p = {{d}}\psi_k(q)\}$. We show that this framework covers, in particular, linear time-invariant Gaussian process regression (LTI-GPR) and Ornstein-Uhlenbeck (OU) processes under a single structural invariant, namely Legendre duality between primal and dual coordinates.

  \item In Section~\ref{sec:duality_lagrangian} and Section~\ref{sec:preservation}, we derive a geometric characterization of symplectomorphisms that preserve Legendre graphs. Viewing each $L_{\psi}$ as a Lagrangian submanifold of $(T^{*}Q,\omega)$, we prove that a symplectomorphism preserves Legendre type structure if and only if it has the normal form $\tau_{d\chi_t} \circ f_t^{\sharp}$, i.e., an exact fiber translation by $d\chi_t$ composed with a cotangent lift of a base diffeomorphism $f_t:Q\to Q$. This provides an \textbf{if and only if} (necessary and sufficient) structural classification of the symplectic representation updates that preserve Legendre dynamics.
   {
  \item In Section~\ref{sec:linearHamiltonian_src},   we introduce Hamiltonian realizations of the Legendre preserving normal form. We first derive generic linear symplectic reservoirs as a diagnostic comparison, then introduce a linear realization and a nonlinear Hamiltonian Symplectic Reservoir generated by Hamiltonians that are at most linear in the momentum. These realizations are Legendre preserving and provide invariance-preserving representations for Legendre dynamics.
  \item In Section~\ref{sec:exp}, we provide numerical illustrations of the structural preservation claims. The experiments verify the local normal-form identities and the trajectory-wise transport of Legendre duality under OU-driven inputs, and compare the N-HSR against generic symplectic and standard orthogonal reservoir baselines.
}
\item
  In Appendix~\ref{app:OU_dynamics}, we develop a Markov-semigroup formulation of Legendre dynamics on exponential families. We show that a Markov semigroup preserves an exponential family if and only if its generator acts affinely on the sufficient statistics, and we apply this criterion to prove that the Ornstein-Uhlenbeck process is an example of Legendre dynamics.
 
\end{itemize}

For the remainder of the paper we adopt the Einstein summation convention where repeated indices are summed over their range unless explicitly stated otherwise.

\section{Domain of discourse: Legendre dynamics}
\label{sec:domain}

This section formalizes the $\left\{\text{input space}, \text{structure}\right\}$ of the main diagram:

\[
\boxed{\left\{\text{input space}, \text{structure}\right\}} \xrightarrow[\text{}]{\text{representation}} 
\left\{\text{feature space}, \text{structure}\right\}.
\]

Many stochastic processes and Bayesian update rules admit an information-geometric description: parameters evolve on a manifold $(Q,\psi)$ equipped with potential $\psi$, and remain on the Legendre graph (Equation~\eqref{eqn:intro_legendre}):
\[
L_\psi = \left\{(q,p): p = {{d}}\psi(q)\right\}.
\]

We call such update rules \textbf{Legendre dynamics} (LD). We show that this abstraction unifies models such as linear time-invariant Gaussian process regression (LTI-GPR) and Ornstein-Uhlenbeck (OU) processes under a single structural invariant: Legendre duality between primal and dual coordinates on the parameter manifold $Q$. LD thus captures the geometrical invariance underlying statistical inference, independent of any particular metric or coordinate system. This isolates the structure that our representations are designed to preserve. We now recall the basic Legendre geometry following  \cite{amari2000methods}:

\begin{definition}
    Let $\mathcal{P}:= \left\{ p_\theta \middle| \theta\in\Xi\right\}$ denote a statistical model on a measurable space $\mathcal{U}$ parametrized by $\Xi$.
    A \textbf{dualistic model} is a tuple $\left(\mathcal{P}, \psi\right)$ where $\psi$ is a $C^2$ and strictly convex \textbf{potential} function such that the following gradient map induces a diffeomorphism between natural and dual coordinates
    \[
    \bar\eta(\theta)=\nabla\psi(\theta)    .
    \]
\end{definition}

\begin{definition}
\label{defn:Legendre_dynamics_SP}
    Let $\left(U_k\right)_{k\geq 0}$ denote a stochastic process on $\mathcal{U}$ with laws $\mu_k = \operatorname{Law}\left(U_k\right)$. We say that $\left(U_k\right)_{k\geq 0}$ is a \textbf{(discrete-time) Legendre dynamics} on the dualistic model $\left(\mathcal{P}, \psi\right)$ if there exists parameter maps:
    \begin{align*}
        g_k: \Xi \rightarrow \Xi
    \end{align*}
    such that for every $k\geq 0$ and $\theta_k\in \Xi$ with $\mu_k$ having density $p_{\theta_k} \in \mathcal{P}$:
    \begin{itemize}
        \item \textbf{Closure}: the next law is again in $\left(\mathcal{P}, \psi\right)$:
        \[
        \theta_{k+1} = g_k\left(\theta_k\right), \quad \text{ where } \mu_{k+1} \text{ has density } p_{\theta_{k+1}}\in \mathcal{P}.
        \]
        
        \item \textbf{Duality Preservation}: there exists a potential function $\psi'$ such that dual coordinates at step $k+1$ are given by
        \[
        \bar\eta_{k+1} = \nabla\psi'(\theta_{k+1})
        \]
        If $\psi'=\psi$, we call the dynamics \textbf{fixed potential} Legendre dynamics. Otherwise, we call it \textbf{general} Legendre dynamics. Thus fixed potential Legendre dynamics is a special case of general Legendre dynamics.
       
    \end{itemize}
    Equivalently, the update $g_k$ preserves the parameter Legendre duality of dualistic model $\left(\mathcal{P}, \psi\right)$. Note that in the data-assimilation setting we allow $g_k$ to depend on the realized observation $y_k$ as well.
\end{definition}

\begin{remark}
    For the rest of the section, when we specialize to the Gaussian exponential family, the Gaussian natural parameters will be denoted by $(\xi,\Lambda)$. The expectation/dual coordinate will be denoted by $\bar\eta(\theta)=\nabla\psi(\theta)$.

\end{remark}

We now show that the class of LTI Gaussian process regression fits into this Legendre dynamics framework.

\begin{theorem}
Let $\mathcal{P}$ be a Gaussian exponential family on $\mathbb{R}^d$. A Gaussian process regression whose prior covariance function can be expressed as the solution to a linear time-invariant stochastic differential equation is (fixed potential) discrete-time Legendre dynamics on the dualistic model $\left(\mathcal{P},\psi\right)$.
\end{theorem}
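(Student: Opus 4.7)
The plan is to reduce GP regression with an LTI-SDE prior to a Kalman filtering recursion on an equivalent Gauss--Markov state-space model, and then verify the two requirements of Definition~\ref{defn:Legendre_dynamics_SP} directly in natural coordinates on the Gaussian exponential family.

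First, I would invoke the standard equivalence between GPs whose covariance function solves a linear time-invariant SDE and finite-dimensional Gauss--Markov state-space models: there exist matrices $F,L,H$ and Gaussian driving and observation noises such that the latent process satisfies $dx = Fx\,dt + L\,dw$ and the regression observations take the form $y_k = H x_k + v_k$. Discretizing the SDE on the sampling grid yields a linear Gaussian Markov transition kernel, so the filtering laws $\mu_k = \mathrm{Law}(X_k \mid y_{1:k})$ are exactly the Kalman posteriors. This identifies the stochastic process $(U_k)$ of Definition~\ref{defn:Legendre_dynamics_SP} with the sequence of posterior marginals, living on $\mathcal{U} = \mathbb{R}^d$.

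Next, for the closure condition, I would write one step of the Kalman recursion in information form. Using natural parameters $\theta = (\eta, \Lambda)$ with $\eta = \Sigma^{-1}\mu$ and $\Lambda = \Sigma^{-1}$ on the Gaussian family $\mathcal{P}$, the measurement update becomes the affine map $\Lambda \mapsto \Lambda + H^{\top} R^{-1} H$, $\eta \mapsto \eta + H^{\top} R^{-1} y_k$, and the prediction step is a closed-form rational map in $(\eta, \Lambda)$ determined by $F$, $L$, and the discretization. Composing the two yields an explicit parameter map $g_k : \Xi \to \Xi$ depending on the observation $y_k$, as allowed by the remark after Definition~\ref{defn:Legendre_dynamics_SP}, and each $\mu_{k+1}$ remains Gaussian, hence in $\mathcal{P}$. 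For the duality preservation requirement, I would then use the fact that on a fixed Gaussian exponential family the log-partition function $\psi$ is an intrinsic, step-independent, strictly convex $C^2$ potential, and the dual (mean) coordinates at every step are given by $\bar{\eta}_k = \nabla \psi(\theta_k)$ by the defining property of the Legendre transform on an exponential family. Since the same $\psi$ works for every $k$, the dynamics is \emph{strong} in the sense of the definition.

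The main obstacle I foresee is the first step: making the reduction cover every admissible LTI-SDE covariance kernel (those corresponding to rational spectral densities of arbitrary finite order) while keeping the argument coordinate-free, as required elsewhere in the paper. I would handle this by citing the standard correspondence between rational spectral densities and finite-dimensional LTI state-space models, and by working directly on the full Gaussian exponential family $\mathcal{P}$ rather than on any particular submodel, so that the fixed potential $\psi$ furnishes the Legendre duality regardless of the discretization chosen.
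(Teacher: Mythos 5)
Your proposal is correct and follows essentially the same route as the paper: reduce the LTI-SDE prior to a Gauss--Markov state-space model, split one GPR step into a prediction map and an information-form measurement update on the natural parameters $(\eta,\Lambda)$, verify closure in the Gaussian family, and obtain strong duality preservation from the fixed log-partition potential $\psi$ via $\bar{\eta}=\nabla\psi(\theta)$. The only detail the paper spells out that you leave implicit is the positive-definiteness of the updated precision $\Lambda + H^{\top}R^{-1}H$, which is immediate.
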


\begin{proof}
Consider Gaussian exponential family on $\mathcal{U}\subseteq \mathbb{R}^d$ given by 
\[
\mathcal{P} =\left\{p_{\xi, \Lambda}(u)= \exp\left(\xi^\top u - \frac{1}{2}u^\top \Lambda u  - \psi\left(\xi,\Lambda\right)\right)\right\}
\]
with natural parametrization $\theta:= (\xi,\Lambda) \in \Xi$ where $\xi\in\mathbb{R}^d$ and $\Lambda \in \operatorname{Sym}_d^{++}$. For this family the sufficient statistics is $T(u) := \left(u, -\frac{1}{2}uu^\top \right)$.
The potential function $\psi(\xi,\Lambda)$ is given by 
\begin{align}
    \label{eqn:GP_potential}
    \psi(\xi,\Lambda) = \frac{1}{2} \xi^\top\Lambda^{-1}\xi - \frac{1}{2} \log \det\Lambda + \frac{d}{2} \log(2\pi).
\end{align}
The dual parameter of $\mathcal{P}$ is given by
\begin{align}
\label{eqn:eta_bar}
\bar{\eta}(\theta) = \mathbb{E}_{\theta}\left[T(u)\right]= \nabla\psi\left(\xi,\Lambda\right) = \left(m,-\frac{1}{2}\left(\Sigma + mm^\top\right)\right),
\end{align}
where $m := \Lambda^{-1}\xi$, $\Sigma = \Lambda^{-1}$.

Let $U_k$ denote the state vector of a Gaussian process at time $k\geq 0$. Since the Gaussian process prior is a solution to a linear time-invariant stochastic differential equation (LTI-SDE), it admits the following state space representation, where one Gaussian process regression step can be written in state-space (Kalman) form by  \cite{sarkka2023bayesian,hartikainen2010kalman}:
\begin{itemize}
    \item \textbf{(Prediction)}:
    \begin{align}
    \label{eqn:GPR_predict}
        U_{k+1} = AU_k + \zeta_k, \quad \zeta_k \sim\mathcal{N}(0, Q_\zeta),
    \end{align}
    where $A$ is the state transition matrix and $Q_\zeta$ is symmetric positive definite.
    \item \textbf{(Update with observation $y_k$)}:
    \begin{align}
    \label{eqn:GPR_update}
        y_{k} = HU_{k} + \epsilon_{k},\quad \epsilon_k\sim\mathcal{N}(0,R)
    \end{align}
    where $H$ is the observation matrix  and $R$ is symmetric positive definite.
\end{itemize}
Note that $\zeta_k$ and $\epsilon_k$ are independent of each other and of $U_k$.

We now show that the induced parameter update of the above two-step process {is} given by
\begin{align}
    \label{eqn:g_k_GPR}
g_k\left(\cdot; y_k\right) = \Phi_{\operatorname{upd}}\left(\cdot; y_k\right) \circ \Phi_{\operatorname{pred}}: \Xi\rightarrow\Xi,
\end{align}
which satisfies the definition of fixed potential discrete time Legendre dynamics. 
First let's show that the prediction step of Gaussian  {process} regression satisfies the definition of Legendre dynamics. By Equation~\eqref{eqn:GPR_predict}, the state vector $U_{k+1}$ has an affine dependence on $U_k$. In particular, if $U_k \sim \mathcal{N}(m,\Sigma)$, then by construction:
\[
U_{k+1}\sim \mathcal{N}(m', \Sigma'), \quad m' = Am, \quad \Sigma' = A\Sigma A^\top  + Q_\zeta.
\]
Substituting the above into natural coordinates, the state transition is given by
\[
\left\{\!\begin{aligned}
\Lambda &= \Sigma^{-1} \\
\xi &= \Sigma^{-1}m
\end{aligned}\right\} 
\mapsto
\left\{\!\begin{aligned}
\Lambda' &= \left(\Sigma'\right)^{-1} = \left( A\Sigma A^\top + Q_\zeta \right)^{-1} \\
\xi' &=\left(\Sigma'\right)^{-1}m' =  \Lambda' Am =\Lambda'A\Lambda^{-1}\xi 
\end{aligned}\right\} .
\]
Hence the induced parameter update map of the prediction step is
\begin{align*}
    \Phi_{\operatorname{pred}}:\Xi&\rightarrow \Xi \\
    \theta & \mapsto \theta':=\left(\xi',\Lambda'\right),
\end{align*}
which satisfies the closure requirement of Definition~\ref{defn:Legendre_dynamics_SP}. By definition, the updated dual parameter is obtained by the gradient of Equation~\eqref{eqn:GP_potential}. Since by construction we have $m' = (\Lambda')^{-1}\xi'$ and together with the exponential family identity $\mathbb{E}_\theta\left[T(u)\right]= \nabla\psi(\theta)$  and the Gaussian sufficient statistics being $T(u) = \left(u,-\frac{1}{2}uu^\top\right)$, we have the following:
\begin{align*}
    \bar{\eta}'(\theta') &:= \mathbb{E}_{\theta'}\left[T(u)\right] = \left(m', -\frac{1}{2}\left(\Sigma' + (m')(m')^\top\right)\right) = \nabla\psi(\theta').
\end{align*}
Hence $\Phi_{\operatorname{pred}}$ satisfies the fixed potential duality preservation condition of Definition~\ref{defn:Legendre_dynamics_SP}. 

Now by Equation~\eqref{eqn:GPR_update}, given observation $y_k$ the Gaussian likelihood in scaled form is

\begin{align}
    \label{eqn:likelihood}
    p(y_k |u) &\propto \exp\left(-\frac{1}{2}\left(y_k - Hu\right)^\top R^{-1} \left(y_k - H u\right) \right) \nonumber \\ 
    & = \exp\left( H^\top R^{-1} y_k \cdot u - \frac{1}{2}u^\top \left(H^\top R^{-1}H \right) u \right).
\end{align}
Hence the natural parameters of $p(y_k |u)$ are given by 
\[
\left(\xi_L, \Lambda_L\right) := \left(H^\top R^{-1} y_k,  H^\top R^{-1}H \right).
\]
Recall $\theta' = \left(\xi', \Lambda'\right)$ from $\Phi_{\operatorname{pred}}$. The posterior density is given by
\[
p(u|y_k) = \frac{p(y_k | u)\cdot p_{\theta'}(u)}{p(y_k)} \propto p(y_k | u)\cdot p_{\theta'}(u).
\]
Since multiplication of exponentials just adds up the linear and quadratic terms in $u$, $p(u|y_k)$ is again Gaussian with natural parameters:
\[
\begin{cases}
    \xi'' &= \xi_L + \xi'  = \xi' + H^\top R^{-1} y_k \\
    \Lambda'' &= \Lambda_L + \Lambda' = \Lambda' + H^\top R^{-1}H.
\end{cases}
\]
Since $R$ is positive definite, for any matrix $H$, $H^\top R^{-1}H$ is positive semi-definite. Since $\Lambda'$ is positive definite and $H^\top R^{-1}H$ is positive semi-definite, by construction $\Lambda''$ is positive definite. Hence $p(u|y_k) \in \mathcal{P}$ with natural parameters $\theta'' = \left(\xi'', \Lambda''\right)$. This means that the induced parameter change map of the update step, given by
\begin{align*}
    \Phi_{\operatorname{upd}}:\Xi&\rightarrow \Xi \\
    \theta' & \mapsto \theta'':=\left(\xi'',\Lambda''\right),
\end{align*}
satisfies the closure requirement of Definition~\ref{defn:Legendre_dynamics_SP}. Moreover, by the usual moment relations for exponential families, the posterior density $p(u|y_k) \in \mathcal{P}$ has moments:
\[
m'':= \left(\Lambda''\right)^{-1} \xi'',\quad \Sigma'':= \left(\Lambda''\right)^{-1}.
\]
Once again  by the exponential family identity $\mathbb{E}_\theta\left[T(u)\right]= \nabla\psi(\theta)$  with the Gaussian sufficient statistics being $T(u) = \left(u,-\frac{1}{2}uu^\top\right)$, the dual parameter at $\theta''$ is given by
\begin{align*}
    \bar{\eta}''(\theta'') = \mathbb{E}_{\theta''}\left[T(u)\right] = \left(m'', -\frac{1}{2}\left(\Sigma'' + (m'')(m'')^\top\right)\right) = \nabla\psi(\theta''),
\end{align*}
where $\psi$ is the same potential as defined in Equation~
\eqref{eqn:GP_potential}. Hence $\Phi_{\operatorname{upd}}$ also satisfies the fixed potential duality preservation condition of Definition~\ref{defn:Legendre_dynamics_SP}.

Therefore $\Phi_{\operatorname{upd}}$ and $\Phi_{\operatorname{pred}}$ both map $\Xi \rightarrow \Xi$ and preserve the Legendre duality $\bar{\eta} = \nabla\psi(\theta)$ with the same Gaussian potential $\psi$. Hence, by Equation~\eqref{eqn:g_k_GPR}, one GPR step (prediction + conditioning) given by $g_k\left(\cdot; y_k\right) = \Phi_{\operatorname{upd}}\left(\cdot; y_k\right) \circ \Phi_{\operatorname{pred}}$ is fixed potential discrete-time Legendre dynamics on the dualistic model $\left(\mathcal{P},\psi\right)$, as desired.
\end{proof}
In particular, one prediction-update step of an LTI-GPR can be seen as a fixed potential Legendre dynamics on the Gaussian exponential family with potential $\psi$ of Equation~\eqref{eqn:GP_potential}.
\begin{remark}
    A broad class of Gaussian process priors admit an LTI state space (hence Legendre dynamics) representation. In particular, if the kernel’s spectral density is a rational function of $\omega^2$, then the GP can be written as an LTI SDE  \cite{hartikainen2010kalman}. 
    
    This includes, for example, the Matern family with appropriate smoothness parameters (where the corresponding SDE is available in closed form). Whilst the squared-exponential kernel does not have an exact finite-dimensional LTI-SDE representation, finite order spectral or Taylor approximations yield approximate LTI state-space models and therefore approximate Legendre dynamics in the sense of the chosen truncation.

    In Appendix~\ref{app:OU_dynamics}, we further demonstrate that Ornstein-Uhlenbeck (OU) dynamics is a special case of Legendre dynamics.
\end{remark}

The Legendre dynamics framework thus captures the structural invariants of the input-side stochastic processes (such as LTI-GPR and OU). We next identify Legendre duality with Lagrangian graphs in $T^*Q$, and then characterize the Hamiltonian updates that preserve Legendre graphs. The characterization serves as the ``representational'' part of the main diagram. Concrete neural network realizations are then introduced in Section~\ref{sec:linearHamiltonian_src}

\section{Legendre duality as Lagrangian submanifolds}
\label{sec:duality_lagrangian}

This section and the next  will focus on the `representation' part of our main diagram:

\[
\{\text{input space},\text{structure}\}
\;\boxed{\xrightarrow{\ \text{representation}\ }}\;
\{\text{feature space},\text{structure}\}.
\]


In classical mechanics, symplectic maps arise as Hamiltonian flow maps and preserve the canonical two-form. In this section we identify the geometric object that carries Legendre duality and fix notation for the subsequent sections. For a $C^2$ potential $\psi \in C^{2}(Q)$ we define the Legendre graph:
\[
L_{\psi} := \{(q,p) \in T^{*}Q : p = d\psi(q)\}.
\]

We prove that $L_{\psi}$ is a Lagrangian submanifold of $(T^{*}Q,\omega)$. Thus, trajectories that preserve Legendre duality are precisely curves that remain inside a Lagrangian graph $L_{\psi}$. This geometric identification is the only ingredient needed in the next section, where we characterize which symplectomorphisms map such Lagrangian graphs to Lagrangian graphs and hence qualify as proper representation updates for Legendre dynamics.

We begin by recalling some elementary symplectic geometry.

Let $Q$ be an $n$-dimensional smooth manifold,  denoting the configuration manifold. Consider the cotangent bundle $M = T^* Q:=\left\{(q,p)\middle| q\in Q, \, p \in T_q^* Q\right\}$ which is also commonly referred to as the phase space. In local coordinates, a point in the phase space $M$ is given by $(q^1, \ldots, q^n, p_1, \ldots, p_n)$.

The cotangent bundle $T^*Q$ is endowed with the canonical one-form $\alpha$ also called the tautological one-form. In local coordinates, $\alpha$ is given by
\[
\alpha = p_i dq^i.
\]

The canonical symplectic form $\omega$ is a two-form on $T^*Q$ defined as the negative exterior derivative of the canonical one-form:
\[
\omega = -d\alpha.
\]
In local coordinates on $T^*Q$, it is given by:
\[
\omega = -d\left(p_idq^i\right)= -dp_i \wedge dq^i = dq^i\wedge dp_i.
\]

Hence the cotangent bundle phase space $M:=T^*Q$ forms a symplectic manifold $(M,\omega)$.

\begin{definition}
    Let $(M,\omega)$ be a $2n$-dimensional symplectic manifold. A submanifold $L\subset M$ is a \textbf{Lagrangian submanifold} if
    \begin{itemize}
        \item $\dim(L) = n = \frac{1}{2}\dim(M)$
        \item $\omega|_{L} \equiv 0$, i.e.
        \[
        \omega(u,v) = 0,\quad \forall u,v\in T_xL,\quad\forall x\in M.
        \]
    \end{itemize}
\end{definition}

The notion of `duality of Legendre type' is a relation between configuration space $Q$ and its cotangent fibres determined by a smooth potential function. More formally:

\begin{definition}[Duality of Legendre Type]
\label{defn:legendre_duality}
    Let $Q$ be an $n$-dimensional smooth manifold and let $\psi: Q \to \mathbb{R}$ be a smooth \textbf{potential} function. A point $(q, p)$ in the cotangent bundle $T^*Q$ is said to satisfy a \textbf{duality of Legendre type} generated by $\psi$ if its fiber coordinate $p$ is the exterior derivative of $\psi$ evaluated at its base coordinate $q$. In particular
    \[
    p = d\psi(q).
    \]

    In local coordinates $(q^1, \dots, q^n)$ on $Q$, this condition becomes $p_i = \frac{\partial\psi}{\partial q^i}$ for each $i=1, \dots, n$. The coordinates $q$ are often called the \textbf{primal coordinates}, while the coordinates $p$ are the \textbf{dual coordinates} in the context of this relationship.

    The set of all points in $T^*Q$ that satisfy this duality forms a set $L_\psi$, defined as the graph of the differential $d\psi$:
    \[
    L_\psi := \left\{(q,p) \in T^*Q\middle| p = d\psi(q)\right\}.
    \]
\end{definition}

\begin{remark}
Note that the geometric definition of Legendre graph $L_\psi$ (Definition~\ref{defn:legendre_duality}) requires $C^2$ potential $\psi$. Strict convexity is used in the statistical dualistic model interpretation of exponential families, where $\psi$ is the log-partition function and is strictly convex, provides the statistical motivation for studying Legendre dynamics.

The geometric results of this paper and their realizations (in Section~\ref{sec:linearHamiltonian_src}) operate at the level of $C^2$ potentials without requiring convexity.

\end{remark}

\begin{remark} The notion of duality of Legendre type arises in multiple fields. Indeed, the term Legendre was motivated by \textbf{Legendre transformation}, which was used in classical mechanics and information geometry to switch between different sets of state variables under a potential function. Our definition is a geometric generalization of this procedure.

    Comparison between classical mechanics and information geometry is summarized in Table~\ref{table:duality}.

    \begin{table}[h!]
    \centering
    \caption{Comparison of Legendre Type Duality in Different Fields}
    \label{table:duality}
    \begin{tabular}{lll}
    \toprule
    & \textbf{Classical Mechanics} & \textbf{Information Geometry} \\
    \midrule
    \textbf{Base Space} & Tangent Bundle $TQ$ & Statistical Manifold $Q$ (or $\Theta$) \\
    \textbf{Base Variable} & $(q, \dot{q})$ & Natural Parameter $\theta$ \\
    \textbf{Potential} & Lagrangian $L(q, \dot{q})$ & Cumulant Generator $\psi(\theta)$\\
    \textbf{Duality Definition} & $p = \nabla_{\dot{q}} L(q, \dot{q})$ & $\bar \eta = \nabla_{\theta} \psi(\theta)$ \\
    \textbf{Dual Variable} & Canonical Momentum $p$ & Expectation Parameter $ \bar  \eta$\\
    \textbf{Dual Space} & Cotangent Bundle $T^*Q$ & Cotangent Bundle $T^*Q$ \\
    \textbf{Dual Potential} & Hamiltonian $H(q, p) = p^T \dot{q} - L$ & Neg-entropy $\phi(\bar  \eta) = \theta^T \bar \eta - \psi$\\
    \textbf{Inverse Duality} & $\dot{q} = \nabla_p H(q, p)$ & $\theta = \nabla_{\bar \eta} \phi(\bar  \eta)$ \\
    \bottomrule
    \end{tabular}
    \end{table}
\end{remark}

Now we show that the set $L_\psi$ is a Lagrangian submanifold of $\left(T^*Q,\omega\right)$. In other words, the algebraic property of `duality of Legendre type' defined by potential $\psi$ can be realized geometrically by the corresponding Lagrangian submanifolds $L_\psi$.

\begin{theorem}
\label{thm:lagrangian_legendre_type}
    Let $Q$ be an $n$-dimensional smooth manifold and let $\psi: Q \to \mathbb{R}$ be a $C^2$ potential function. The set $L_\psi := \left\{(q,p) \in T^*Q\middle| p = d\psi(q)\right\} \subset T^* Q$ is a Lagrangian submanifold of $\left(T^*Q,\omega\right)$. 
\end{theorem}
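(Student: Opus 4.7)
The plan is to exhibit $L_\psi$ explicitly as the image of a smooth section of the cotangent bundle and then show that the canonical symplectic form pulls back to zero along this section, using the defining (tautological) property of $\alpha$.

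First I would introduce the smooth map
\begin{equation*}
s_\psi : Q \to T^*Q, \qquad s_\psi(q) := (q, d\psi(q)),
\end{equation*}
which is a section of the bundle projection $\pi: T^*Q \to Q$, i.e.\ $\pi \circ s_\psi = \mathrm{id}_Q$. Since $\pi$ is a submersion and $s_\psi$ is its right inverse, $s_\psi$ is automatically a smooth embedding, and its image is precisely $L_\psi$. This gives $L_\psi$ the structure of an embedded smooth submanifold of $T^*Q$ of dimension $n = \tfrac{1}{2}\dim T^*Q$, which is the first Lagrangian condition.

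Next I would verify the isotropy condition $\omega|_{L_\psi} \equiv 0$ by pulling back through $s_\psi$. The key identity is the defining property of the tautological one-form: for any one-form $\beta$ on $Q$ (viewed as a section $Q \to T^*Q$), one has $\beta^* \alpha = \beta$. Applying this with $\beta = d\psi$ gives $s_\psi^* \alpha = d\psi$. Hence
\begin{equation*}
s_\psi^* \omega \;=\; s_\psi^*(-d\alpha) \;=\; -d\bigl(s_\psi^* \alpha\bigr) \;=\; -d(d\psi) \;=\; 0,
\end{equation*}
using naturality of the exterior derivative and $d^2 = 0$. Since $s_\psi$ is a diffeomorphism onto $L_\psi$, the vanishing of $s_\psi^*\omega$ is equivalent to $\omega|_{L_\psi} = 0$.

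If one prefers a more hands-on verification, I would instead work in local coordinates: parametrize $L_\psi$ by $q \mapsto (q^i, \partial_i \psi(q))$, so that tangent vectors to $L_\psi$ have the form $v = v^i \partial_{q^i} + v^i (\partial_i \partial_j \psi)\partial_{p_j}$, and check that
\begin{equation*}
\omega(v,w) = v^i w^j \partial_i\partial_j \psi - w^i v^j \partial_i \partial_j \psi = 0
\end{equation*}
by symmetry of the Hessian of $\psi$ (which is precisely where the $C^2$ hypothesis on $\psi$ is used). The main subtlety, such as it is, lies in invoking the tautological property $\beta^*\alpha = \beta$ cleanly; but this is a standard fact about $T^*Q$ and requires no hypothesis beyond smoothness of $\psi$. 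Strict convexity of $\psi$ is not needed for the Lagrangian property itself (only the graph structure and $C^2$ regularity), although it will become relevant later when $L_\psi$ is used to encode Legendre duality with a well-defined inverse transform.
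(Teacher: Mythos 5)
Your proof is correct and follows essentially the same route as the paper: realize $L_\psi$ as the image of the section $q \mapsto (q, d\psi(q))$, note it is an embedded $n$-dimensional submanifold, and compute $s_\psi^*\omega = -d(s_\psi^*\alpha) = -d(d\psi) = 0$. The only (harmless) differences are that you invoke the tautological property $\beta^*\alpha = \beta$ and the fact that sections are automatically embeddings, where the paper verifies both by an explicit Jacobian and coordinate computation.
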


\begin{proof}
    First notice that $\psi$ provides a natural parametrization of $L_\psi$ with coordinates given by $Q$ under the map:
    \begin{align*}
        \Phi_\psi:Q & \rightarrow T^*Q\\
         q &\rightarrow \left(q,d\psi(q)\right).
    \end{align*}
    Then $L_\psi = \operatorname{Im}\left(\Phi_\psi\right) = \Phi_\psi\left(Q\right)$. By construction $\Phi_\psi$ is smooth. We now show that $\Phi_\psi$ is an injective immersion:
    \begin{itemize}
        \item \textbf{Injectivity:} Let $q_1,q_2 \in Q$ be arbitrary. Suppose $\Phi_\psi(q_1) = \Phi_\psi(q_2)$, then $\left(q_1, d\psi(q_1)\right) = \left(q_2, d\psi(q_2)\right)$. Since the first components must equal, hence $q_1 = q_2$.
        \item \textbf{Immersion:} We need to show that $\operatorname{Jac}\Phi_\psi$ has rank $n$. In local coordinates of $T^*Q$, the map $\Phi_\psi$ is given by
        \[
        \Phi_\psi(q) = \left(q^1,\ldots,q^n,\frac{\partial\psi}{\partial q^1},\ldots,\frac{\partial \psi}{\partial q^n}\right).
        \]
        Hence the Jacobian $\operatorname{Jac}\Phi_\psi$ is a $2n\times n$ matrix given by
        \[
        \operatorname{Jac}\Phi_\psi = 
        \begin{bmatrix}
            \frac{\partial q^1}{\partial q^1} & \dots &\frac{\partial q^1}{\partial q^n} \\
            \vdots & & \vdots \\
            \frac{\partial q^n}{\partial q^1} & \dots &\frac{\partial q^n}{\partial q^n} \\ \cmidrule(lr){1-3}
            \frac{\partial p_1}{\partial q^1} & \dots &\frac{\partial p_1}{\partial q^n} \\
            \vdots & & \vdots \\
            \frac{\partial p_n}{\partial q^1} & \dots &\frac{\partial p_n}{\partial q^n}
        \end{bmatrix}
        = 
        \begin{bmatrix}
            I_n \\ 
            \operatorname{Hess}(\psi)\\
        \end{bmatrix}.
        \]
    \end{itemize}
    The top equality is due to the fact that $\frac{\partial q^i}{\partial q^j} = \delta^i_j$ on the same local chart. The bottom equality is by construction of the local coordinates $p_i= d\psi/\partial q^i$:
    \[
    \frac{\partial p_i}{\partial q^j} = \frac{\partial}{\partial q^j} \frac{\partial\psi}{\partial q^i} = \frac{\partial^2\psi}{\partial q^j\partial q^i}.
    \]
    Since $I_n$ has rank $n$, so does $\operatorname{Jac}\Phi_\psi$ and therefore $\Phi_\psi$ is an immersion.

    Notice that $\pi\circ \Phi_\psi = \operatorname{Id}_Q$ (projection to the base), $\Phi_\psi$ is a homeomorphism onto its image. Since $\Phi_\psi$ is a smooth injective immersion, it is an embedding. Hence $L_\psi$ is an embedded submanifold of $T^* Q$. Moreover, since $Q$ is a smooth $n$-dimensional manifold, the dimension condition of a Lagrangian submanifold is satisfied as follows:
    \[
    \dim\left(L_\psi\right) = n = \frac{1}{2}\dim\left(T^*Q\right).
    \]

    Now it remains to show that $L_\psi$ is isotropic with respect to $\omega$, i.e. $\omega|_{L_\psi} \equiv 0$. Since $\Phi_\psi$ is an injective immersion, it suffices to show that $\Phi_\psi^*\omega \equiv 0$. The reason is, since $\Phi_\psi$ is an injective immersion: for each $x := \Phi_\psi(q) \in L_\psi$ and each $v\in T_x L_\psi$, there exists a unique $u\in T_q Q$ such that $v = \left(\Phi_\psi\right)_*u$. Thus, if $\left(\Phi^*_\psi\omega\right)_q\left(u_1,u_2\right) = 0$ for all $u_1,u_2\in T_q Q$, then for all $x = \Phi_\psi(q) \in L_\psi$ and all $v_1,v_2\in T_x L_\psi$, we have:
    \[
    \omega_x(v_1,v_2) = \omega_{\Phi_\psi(q)}\left(\left(\Phi_\psi\right)_* u_1, \left(\Phi_\psi\right)_* u_2\right) = \left(\Phi_\psi^*\omega\right)_q\left(u_1,u_2\right)= 0.
    \]
    
    
    The above implies $\omega|_{L_\psi} \equiv  0$. So now it remains to show that $\Phi_\psi^*\omega \equiv 0$. We begin by considering the tautological one-form $\alpha = p_i dq^i$. Recall by construction that
    \[
    p_i = \frac{\partial\psi}{\partial q^i}.
    \]

    The pullback $\Phi_\psi^*\alpha$ is a one-form on $Q$:
    \[
    \Phi_\psi^* \alpha = \frac{\partial \psi }{\partial q^i} dq^i = d\psi,
    \]
    where the last equality is by definition of exterior derivative. Next we consider the pullback of $\omega$ under $\Phi_\psi$. First recall $\omega = -d\alpha$ by definition. Since pullback commutes with exterior derivative, we have $d\circ\Phi_\psi^* = \Phi_\psi^* \circ d$ and hence
    \begin{align*}
    \Phi_\psi^*\omega &= \Phi^*_\psi\left(-d\alpha\right) = -d\left(\Phi_\psi^*\alpha\right) \\
    &  = -d(d\psi) = 0,
    \end{align*}
    where the last equality is due to the fact that $d^2\psi = 0$ for smooth function $\psi$. This means $\Phi_\psi^*\omega = 0$, which implies $\omega|_{L_\psi}\equiv 0$ and the proof is complete.
\end{proof}

We call $L_\psi$ the \textbf{Lagrangian submanifold of Legendre type corresponding to the potential $\psi$}. Thus Legendre duality on the parameter space $Q$ can be realized geometrically as a Lagrangian submanifold $L_\psi\subset T^*Q$. In the next sections we study how symplectomorphisms, in particular the symplectic reservoir maps, act on such Lagrangian graphs and the sufficient conditions for which Legendre structure is preserved.

\section{Legendre preserving symplectomorphisms}
\label{sec:preservation}

In this section we continue to investigate the representation part of our main diagram:

\[
\{\text{input space},\text{structure}\}
\;\boxed{\xrightarrow{\ \text{representation}\ }}\;
\{\text{feature space},\text{structure}\}
\]

Legendre dynamics are trajectories across Lagrangian graphs $L_{\psi}$. Under the geometrical characterization of Legendre graphs as Lagrangian submanifolds described in Section~\ref{sec:duality_lagrangian}, we now characterize symplectomorphisms that preserve Lagrangian submanifolds. We prove that a symplectomorphism preserves Legendre type structure if and only if it has the normal form $F_t = \tau_{d\chi_t} \circ f_t^{\sharp}$. That is, an exact fiber translation by $d\chi_t$ composed with a cotangent lift of a base diffeomorphism $f_t:Q \to Q$. This completes the structural classification of admissible representation updates that preserve Legendre dynamics. This characterization provides the design space from which SR will construct its representation.

Preservation of Lagrangian submanifolds is a broader symplectic geometric notion. The present paper focuses on the sharper condition needed for Legendre dynamics: preservation of Lagrangian submanifolds of Legendre type generated by potentials. This is the crucial property for applications like information geometry where the Legendre duality is paramount:

\begin{definition}
    Let $Q$ be an $n$-dimensional smooth manifold and let $F:T^* Q \rightarrow T^*Q$ be a diffeomorphism on the cotangent bundle $T^*Q$. We say that $F$ \textbf{preserves the Legendre type structure} if for any smooth potential function $\psi:Q\rightarrow \mathbb{R} $, the map $F$ transforms the Lagrangian submanifold of Legendre type $L_\psi$ into another Lagrangian submanifold of Legendre type. That is, there exists a potential function $\psi_{out}:Q\rightarrow \mathbb{R}$ such that the image $F(L_\psi) = L_{\psi_{out}}$.
\end{definition}





For the rest of the discussion, we denote the zero section by:
\[
\Gamma_0:=\left\{ (q,p) \in T^*Q \middle| p = 0\right\},
\]
where $\Gamma_0$ is the graph of the closed $1$-form $\alpha = 0$.

 {
\begin{lemma}[Zero-section normal form]\label{lem:zero-section}
Let $\Psi : T^*Q \to T^*Q$ be a symplectomorphism which:
\begin{enumerate}[label=(\roman*)]
    \item\label{hyp:zero}[Preserves the zero section]: $\Psi(\Gamma_0) = \Gamma_0$, where
          $\Gamma_0 \subset T^*Q$ is the zero section;
    \item\label{hyp:germ}[Sends every germ of an exact graph to a local graph]:  for every $z=(q,p)\in T^*Q$ and every
    germ of a smooth exact $1$-form $d\varphi$ at $q$ with
    $d\varphi(q)=p$, the image of the graph
    \[
        \Gamma_{d\varphi}:=\{(x,d\varphi(x))\}
    \]
    under $\Psi$ is, near $\Psi(z)$, the graph of a smooth local
    $1$-form over an open subset of $Q$.
\end{enumerate}
Then there exists a diffeomorphism $f : Q \to Q$ such that
$\Psi = f^\sharp$, where
\[
    f^\sharp(q, p) = \left(f(q),\, (f^{-1})^* p\right).
\]
\end{lemma}

\begin{proof}
Throughout this proof we use ``germ'' in the standard sense: two local
sections of $\pi:T^*Q\to Q$ determine the same germ at $q\in Q$ if
they agree on some open neighbourhood of $q$.

Hypothesis~\ref{hyp:germ} thus means the following: For each
$z=(q,p)\in T^*Q$ and each local potential $\varphi$ with
$d\varphi(q)=p$, there exists an open neighbourhood $U$ of
$\Psi(z)$ such that $\Psi(\Gamma_{d\varphi})\cap U$ is the graph of a smooth local section of $T^*Q\to Q$ over an open subset of $Q$.

To show that $\Psi$ is a cotangent lift, we first show that $\Psi$ is fibre-preserving. In particular, its base component depends only on $q$, not on the covector $p$ and we show that $\pi\circ\Psi$ is constant along each cotangent fibre. In local canonical
coordinates near an arbitrary point $z=(q_0,p_0)$, we write
\[
    \Psi(q,p)=\bigl(\mathcal Q(q,p),\mathcal P(q,p)\bigr),
\]
where $\mathcal Q$ is the base component and $\mathcal P$ is the
fibre component of $\Psi$ respectively. Set
\[
    A=\frac{\partial \mathcal{Q}}{\partial q} (z) = D_q\mathcal Q(z),
    \qquad
    B=\frac{\partial \mathcal{Q}}{\partial p} (z) = D_p\mathcal Q(z).
\]
Thus by construction
    $A:T_{q_0}Q\to T_{\mathcal Q(z)}Q$ and $
    B:T^*_{q_0}Q\to T_{\mathcal Q(z)}Q .$

Let $S:T_{q_0}Q\to T^*_{q_0}Q$ be an arbitrary symmetric linear map. Then in local coordinates, there exists a smooth local potential $\varphi$ such that $S$ is the Hessian of $\varphi$, in particular
\[
    d\varphi(q_0)=p_0,
    \qquad
    \operatorname{Hess}_{q_0}\varphi=S.
\]
By construction, the tangent space to the graph $\Gamma_{d\varphi}$ at $z$ is
\[
    T_z\Gamma_{d\varphi}
    =
    \{(\delta q,S\delta q):\delta q\in T_{q_0}Q\}.
\]
By hypothesis~\ref{hyp:germ}, $\Psi(\Gamma_{d\varphi})$ is locally a graph of a smooth local $1$-form over $Q$ near $\Psi(z)$. A smooth local $1$-form over an open subset $V\subset Q$ is exactly a smooth section $\sigma:V\rightarrow T^*Q$ of the cotangent projection $\pi:T^*Q \rightarrow Q$. Hence equivalently, after shrinking neighbourhoods if
necessary, there exist an open set $V\subset Q$ and a smooth local section
$\sigma:V\to T^*Q$ of $\pi:T^*Q\to Q$ such that
$\Psi(\Gamma_{d\varphi})\cap U$ agrees with $\sigma(V)$ near $\Psi(z)$.
Since $\sigma$ is a section, $\pi\circ\sigma=\mathrm{id}_V$, so
\[
    \pi|_{\sigma(V)}:\sigma(V)\to V
\]
is a local diffeomorphism with inverse $\sigma$. Therefore its derivative
at $\Psi(z)$ is an isomorphism. Equivalently by chain rule, the map
\[
    D\pi_{\Psi(z)}\circ D\Psi_z:
    T_z\Gamma_{d\varphi}\to T_{\pi(\Psi(z))}Q
\]
is an isomorphism. In local coordinates, this projected derivative is thus
\[
    \delta q
    \longmapsto
    D_q\mathcal Q(z)\delta q+D_p\mathcal Q(z)S\delta q
    =:
    (A+BS)\delta q.
\]
Hence $A+BS$ is invertible for every symmetric $S$. Setting $S=0$ shows that $A$ itself is invertible. We now show that $B$ must vanish. Suppose by contrary that $B\neq0$. Then there exists $\vartheta\in T^*_{q_0}Q$ such that
\[
    B\vartheta\neq0.
\]
Let $v=-A^{-1}B\vartheta$, then since $B\vartheta\neq0$ and $A$ is invertible, we have $v\neq0$.

Choose a symmetric linear map
\[
    S:T_{q_0}Q\to T^*_{q_0}Q
\]
such that
\[
    Sv=\vartheta.
\]
Note that one can construct $S$ by choosing a covector $\ell\in T^*_{q_0}Q$ with $\ell(v)=1$, and then define a symmetric bilinear form 
\[
    b(x,y)
    =
    \ell(x)\vartheta(y)+\ell(y)\vartheta(x)-\vartheta(v)\ell(x)\ell(y).
\]
Then $b(v,y)=\vartheta(y)$ for all $y$ and we can set $S$ to be the symmetric linear map corresponding to $b$, i.e. $Sx(y)=b(x,y)$. Then
$Sv=\vartheta$.

By construction of $S$, we have
\[
    (A+BS)v
    =
    Av+B\vartheta
    =
    -B\vartheta+B\vartheta
    =
    0,
\]
this contradicts the fact that $A+BS$ is invertible, hence we must have $B=0$.

Since $z$ was arbitrary, $B \equiv 0 $ means
\[
    D_p\mathcal Q\equiv0.
\]

In particular the base component $\mathcal Q(q,p)$ is independent of $p$, and there exists smooth $f:Q\to Q$ such that
\[
    \mathcal Q(q,p)=f(q),
\]
equivalently we have
\[
    \pi\circ\Psi=f\circ\pi.
\]
This means that $\Psi$ maps each fibre $T_q^*Q$ into the fibre
$T^*_{f(q)}Q$, i.e. $\Psi$ is fibre-preserving and: 
\[
    \Psi(q,p)=\bigl(f(q),\mathcal P(q,p)\bigr).
\]

We next show that $f$ is a diffeomorphism. Since $\Psi$ is a symplectomorphism, it is a diffeomorphism. by hypothesis~\ref{hyp:zero}, we have $\Psi(\Gamma_0)=\Gamma_0$, so the restriction 
\[
    \Psi|_{\Gamma_0}:\Gamma_0\to\Gamma_0
\]
is a diffeomorphism.  Under the identification $Q\cong\Gamma_0$, $q\mapsto(q,0)$, this restriction is exactly the map $q\mapsto f(q)$, since
\[
    \Psi(q,0)=(f(q),0).
\]
Therefore $f:Q\to Q$ is itself a diffeomorphism. Now it remains to identify the fibre component $\mathcal P$ in $\Psi(q,p)=\left(f(q),\mathcal P(q,p)\right)$. We first compute the pullback of the canonical symplectic form $\omega$ under $\Psi$. In output canonical Darboux coordinates $(\hat q,\hat p)$ on $T^*Q$,
the symplectic form is
\[
    \omega=d\hat q^a\wedge d\hat p_a.
\]
where $\hat{q}^a=f^a(q)$ and $\hat{p}_a=\mathcal P_a(q,p)$ under $\Psi$.

Therefore
\[
    \Psi^*\omega
    =
    df^a(q)\wedge d\mathcal P_a(q,p).
\]
Expanding both parts gives
\[
    df^a
    =
    \frac{\partial f^a}{\partial q^i}\,dq^i, \qquad d\mathcal P_a
    =
    \frac{\partial \mathcal P_a}{\partial q^j}\,dq^j
    +
    \frac{\partial \mathcal P_a}{\partial p_j}\,dp_j.
\]
The $dq\wedge dp$ part of $\Psi^*\omega$ is thus given by
\[
    \frac{\partial f^a}{\partial q^i}
    \frac{\partial \mathcal P_a}{\partial p_j}
    \,dq^i\wedge dp_j.
\]
Since $\Psi$ is a symplectomorphism, we have,
\[
    \Psi^*\omega=\omega=dq^i\wedge dp_i.
\]
Comparing the $dq\wedge dp$ coefficients gives
\[
    \sum_a
    \frac{\partial f^a}{\partial q^i}
    \frac{\partial \mathcal P_a}{\partial p_j}
    =
    \delta_{ij}.
\]
Equivalently, notice in matrix form,
\[
    Df(q)^\top
    \frac{\partial \mathcal P}{\partial p}(q,p) =I \implies \frac{\partial \mathcal P}{\partial p}(q,p)
    =
    Df(q)^{-\top}.
\]
In particular, this means $\partial\mathcal P/\partial p$ is independent of $p$, and hence there exists a $1$-form $\gamma$ on $Q$ such that
\[
    \mathcal P(q,p)=Df(q)^{-\top}p+\gamma(q).
\]

Finally, again by hypothesis~\ref{hyp:zero} we have
\[
    \Psi(q,0)=(f(q),0).
\]
Hence substituting $p=0$ into
\[
    \mathcal P(q,p)=Df(q)^{-\top}p+\gamma(q)
\]
gives
\[
    \mathcal P(q,0)=\gamma(q).
\]
But $\Psi(q,0)=(f(q),0)$, so $\mathcal P(q,0)=0$. Hence
\[
    \gamma(q)=0.
\]
Therefore
\[
    \mathcal P(q,p)=Df(q)^{-\top}p.
\]
Since $Df(q)^{-\top}p=(f^{-1})^*p$, the pullback of $p\in T_q^*Q$ by $f^{-1}$,  we obtain the desired equality:
\[
    \Psi(q,p)
    =
    \bigl(f(q),Df(q)^{-\top}p\bigr)
    =
    \bigl(f(q),(f^{-1})^*p\bigr)
    =
    f^\sharp(q,p).
\]
\end{proof}


\begin{theorem}[Closed-graph normal form]
\label{thm:graph_liftcomposition}
Let $Q$ be a smooth manifold and let $F : T^*Q \to T^*Q$ be a
symplectomorphism.
Then $F$ maps the graph of every closed $1$-form on $Q$ to the graph of
another closed $1$-form if and only if
\[
    F = \tau_\beta \circ f^\sharp,
\]
where $f : Q \to Q$ is a diffeomorphism, $\beta$ is a closed $1$-form on
$Q$, $\tau_\beta(q, p) = (q,\, p + \beta(q))$, and
$f^\sharp(q, p) = \bigl(f(q),\, (f^{-1})^* p\bigr)$.
\end{theorem}

\begin{proof}
\textbf{($\Leftarrow$)} \enspace Assume $F$ can be written as the composition of a cotangent lift of a diffeomorphism $f:Q\rightarrow Q$ and fibre-wise aﬃne translation of a closed $1$-form $\beta$. For every closed $1$-form $\alpha$ on $Q$, by construction we have
\[
f^\sharp(\Gamma_\alpha)=\Gamma_{(f^{-1})^*\alpha}.
\]
Since $d\alpha=0$, we have
\[
d((f^{-1})^*\alpha)=(f^{-1})^*(d\alpha)=0.
\]
Hence $(f^{-1})^*\alpha$ is closed. Applying the fibre translation gives
\[
F(\Gamma_\alpha)
=
\tau_\beta(\Gamma_{(f^{-1})^*\alpha})
=
\Gamma_{(f^{-1})^*\alpha+\beta}.
\]
$(f^{-1})^*\alpha+\beta$ is closed since both $(f^{-1})^*\alpha$ and $\beta$ are closed. Hence $F$ maps graphs of
closed $1$-forms to graphs of closed $1$-forms.

\textbf{($\Rightarrow$)}\enspace Conversely suppose $F$ maps the graph of every closed $1$-form on Q to the graph of another closed $1$-form on $Q$. The zero section $\Gamma_0$ is the graph of the closed $1$-form
$\alpha = 0$, so there exists closed $1$-form $\beta$ such that 
\[
F(\Gamma_0) = \Gamma_\beta.
\]
Now the goal is to show that $F$ is a composition of a cotangent lift and an affine translation. It suffices to show that the following map is a cotangent lift:
\[
\Psi:= \tau_{-\beta}\circ F,
\]
where $\tau_{-\beta}$ is the fibre-wise affine translation by $-\beta$. 
Since $d\beta=0$, the fibre translation $\tau_{-\beta}$ is a
symplectomorphism. Hence $\Psi$ is also a symplectomorphism. It remains to verify that $\Psi$ satisfies the two hypotheses of Lemma~\ref{lem:zero-section}. By construction $\Psi(\Gamma_0) = \Gamma_0$, so hypothesis~\ref{hyp:zero} of Lemma~\ref{lem:zero-section} holds. 

It remains to show that $\Psi$ satisfies hypothesis~\ref{hyp:germ} of Lemma~\ref{lem:zero-section}.
Let $z = (q, p) \in T^*Q$ and let $\varphi$ be a local smooth potential
near $q$ with $d\varphi(q) = p$. Shrinking the domain of $\varphi$ if necessary, choose a bump function supported in this domain that equals to one near $q$. Multiplying $\varphi$ by this bump function and extending by zero gives a global
smooth function $\varphi_e:Q\to\mathbb R$ such that
\[
d\varphi_e=d\varphi
\]
near $q$. Since $d\varphi_e$ is exact, it is closed. By assumption, there exists
a closed $1$-form $\alpha_e'$ such that
\[
F(\Gamma_{d\varphi_e})=\Gamma_{\alpha_e'}.
\]
Since $\beta$ is closed,
\[
\Psi(\Gamma_{d\varphi_e})
=
\tau_{-\beta}(\Gamma_{\alpha_e'})
=
\Gamma_{\alpha_e'-\beta},
\]
which is again the graph of a closed $1$-form, hence in particular a
local graph over $Q$.

Since $d\varphi$ and $d\varphi_e$ have the same germ at $q$, their graphs
agree near~$z$, so $\Psi(\Gamma_{d\varphi})$ is locally a graph
over~$Q$ near $\Psi(z)$ and the hypothesis~\ref{hyp:germ} of Lemma~\ref{lem:zero-section} holds.

Therefore by Lemma~\ref{lem:zero-section}, $\Psi = f^\sharp$ for some
diffeomorphism $f : Q \to Q$, and hence
$F = \tau_\beta \circ f^\sharp$.

\end{proof}

\begin{theorem}[Legendre-graph normal form]
\label{thm:preserve_legendre}
Let $Q$ be an $n$-dimensional smooth manifold and let $F:T^*Q\to T^*Q$ be a symplectomorphism. Then $F$ preserves Legendre type structure if and only if
\[
    F=\tau_{d\chi}\circ f^\sharp
\]
for some diffeomorphism $f:Q\to Q$ and some smooth function
$\chi:Q\to\mathbb R$. Here
\[
    \tau_{d\chi}(q,p)=(q,p+d\chi(q)),
\]
and
\[
    f^\sharp(q,p)=\left(f(q),(f^{-1})^*p\right).
\]
Moreover, in this case
\[
    F(L_\psi)=L_{\psi\circ f^{-1}+\chi}.
\]
\end{theorem}

\begin{proof}
\textbf{($\Leftarrow$)} Suppose$ F=\tau_{d\chi}\circ f^\sharp$.
Let
\[
    L_\psi=\{(q,p)\in T^*Q:p=d\psi(q)\}
\]
be a Legendre graph. If $(q,p)\in L_\psi$, then $p=d\psi(q)$. By definition of  cotangent lift,
\[
    f^\sharp(q,d\psi(q))
    =
    \left(f(q),(f^{-1})^*d\psi\right).
\]
Since pullback commutes with exterior differentiation,
\[
    (f^{-1})^*d\psi=d(\psi\circ f^{-1}).
\]
Let $q'=f(q)$, we get
\[
    f^\sharp(q,d\psi(q))
    =
    \left(q',d(\psi\circ f^{-1})(q')\right).
\]
Now apply the  fibre translation under $d\chi$, we have
\[
    \tau_{d\chi}
    \left(q',d(\psi\circ f^{-1})(q')\right)
    =
    \left(q',d(\psi\circ f^{-1}+\chi)(q')\right).
\]
Hence
\[
    F(L_\psi)=L_{\psi\circ f^{-1}+\chi}.
\]
Therefore $F$ preserves Legendre type structure.

\textbf{($\Rightarrow$)} Conversely, suppose $F$ preserves Legendre
type structure. Consider the  zero potential given by $\psi=0$, since
\[
    L_0=\Gamma_0,
\]
by assumption of Legendre preservation of $F$, there exists a globally defined smooth function
$\chi:Q\to\mathbb R$ such that
\[
    F(\Gamma_0)=F(L_0)=L_\chi=\Gamma_{d\chi}.
\]
Set
\[
    \Psi:=\tau_{-d\chi}\circ F.
\]
Since $\tau_{-d\chi}$ is an exact fibre translation, it is a
symplectomorphism. Hence $\Psi$ is also a symplectomorphism. Moreover by construction, we have
\[
    \Psi(\Gamma_0)
    =
    \tau_{-d\chi}(F(\Gamma_0))
    =
    \tau_{-d\chi}(\Gamma_{d\chi})
    =
    \Gamma_0.
\]
Thus hypothesis~\ref{hyp:zero} of Lemma~\ref{lem:zero-section} holds.

We now verify that hypothesis~\ref{hyp:germ} of Lemma~\ref{lem:zero-section} holds. Let $z=(q,p)\in T^*Q$, and let $\varphi$ be a local smooth potential near $q$ such that $d\varphi(q)=p$. Shrinking the domain of $\varphi$ if necessary, choose a bump
function supported in this domain and equal to one near $q$. Multiplying
$\varphi$ by this bump function and extending by zero gives a global
smooth function $\varphi_e:Q\to\mathbb R$
such that
\[
    d\varphi_e=d\varphi
\]
near $q$. Therefore $\Gamma_{d\varphi_e}$ and $\Gamma_{d\varphi}$
agree as germs near $z$.

Since $F$ preserves Legendre type structure, there exists a smooth
potential $\psi_{\mathrm{out}}:Q\to\mathbb R$ such that
\[
    F(L_{\varphi_e})=L_{\psi_{\mathrm{out}}}.
\]
Now applying $\tau_{-d\chi}$, we obtain
\[
    \Psi(L_{\varphi_e})
    =
    \tau_{-d\chi}(L_{\psi_{\mathrm{out}}})
    =
    L_{\psi_{\mathrm{out}}-\chi}.
\]
Since $\Gamma_{d\varphi_e}$ and $\Gamma_{d\varphi}$ agree as germs near $z$, thus $\Psi$ maps the germ of $\Gamma_{d\varphi}$ near $z$ to the
germ of a local graph near $\Psi(z)$. Hence
hypothesis~\ref{hyp:germ} of Lemma~\ref{lem:zero-section} holds.

By Lemma~\ref{lem:zero-section}, there exists a diffeomorphism
$f:Q\to Q$ such that
\[
    \Psi=f^\sharp.
\]
Therefore
\[
    F=\tau_{d\chi}\circ\Psi
    =
    \tau_{d\chi}\circ f^\sharp.
\]
In particular, we have the desired equality
\[
F(L_\psi)=L_{\psi\circ f^{-1}+\chi}.
\]
\end{proof}

\begin{remark}[No global exactness assumption]
The proof of Theorem~\ref{thm:preserve_legendre} does not require an assumption such as
\[
H^1_{\mathrm{dR}}(Q)=0.
\]
The argument uses only the extension of local potentials by bump functions. Exactness of the fibre translation is forced by applying $F$ to the zero
section:
\[
F(L_0)=L_\chi=\Gamma_{d\chi}.
\]
Thus the translation term is globally exact because preservation of Legendre
type structure is applied to the global potential $\psi=0$.
\end{remark}

\begin{remark}[General versus fixed-potential preservation]
The preservation theorem gives general Legendre-graph preservation:
\[
    L_\psi \mapsto L_{\psi\circ f^{-1}+\chi}.
\]
Fixed-potential preservation is the special case in which
\[
    \psi\circ f^{-1}+\chi=\psi+c
\]
for some constant $c$.
\end{remark}

\begin{remark}[Preservation on state and readout]
\label{rmk:readout}
The preservation statement is made at the level of the cotangent-bundle state representation $(q_k,p_k)\in T^*Q$. Downstream task readouts may be trained on top of this representation, but they are not part of the graph-preservation claim.
\end{remark}

}

 {

\section{Hamiltonian realizations of Legendre-preserving maps}
\label{sec:linearHamiltonian_src}

This section closes the loop. Using the characterization from Section~\ref{sec:preservation}, we introduce the $\left\{\text{feature space}, \text{structure}\right\}$ part of the diagram.
\[
\left\{\text{input space}, \text{structure}\right\} \xrightarrow[\text{}]{\text{representation}} 
\boxed{\left\{\text{feature space}, \text{structure}\right\}}.
\]

In particular, we realize the Legendre-preserving normal form at the neural network level. We first separate generic symplecticity from Legendre preservation, then construct linear and nonlinear Hamiltonian SR updates that remain inside the Legendre preserving class. This establishes Hamiltonian SR as an invariance-preserving representation for Legendre dynamics.

\begin{theorem}
\label{thm:linear_hamiltonian_specialform}
    Let $A$ be a complete vector field on $Q$ and let $V:Q \rightarrow \mathbb{R}$ be smooth. Let $H(q,p) = \langle p,A(q)\rangle + V(q)$ be a Hamiltonian on $T^*Q$ that is at most linear in the momentum $p$. For each $t$, the flow map $\phi_t : T^*Q \rightarrow T^*Q$ generated by $H$ is a Legendre preserving symplectomorphism of the form $\phi_t = \tau_{d\chi_t}\circ f^\sharp_t$.
\end{theorem}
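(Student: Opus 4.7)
My plan is to use the hypothesis ``at most linear in $p$'' to split $H$ into a piece whose Hamiltonian flow is a cotangent lift and a piece that translates the cotangent fibre, and then to invoke the `$\Leftarrow$' direction of Theorem~\ref{thm:preserve_legendre} once the required normal form is established. In Darboux coordinates I would write
\[
H(q,p) = X^{i}(q)\,p_{i} + V(q),
\]
where $X = (X^{i})$ is the smooth vector field $\partial H/\partial p$ on $Q$ and $V(q) := H(q,0) \in C^{\infty}(Q)$ (both globally well-defined from $H$). Hamilton's equations read $\dot{q}^{i} = X^{i}(q)$ and $\dot{p}_{i} = -\partial_{i}X^{j}(q)\,p_{j} - \partial_{i}V(q)$. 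The first equation is autonomous on $Q$, so setting $f_t$ to be the flow of $X$ yields $\pi\circ\phi_t = f_t\circ\pi$; in particular $\phi_t$ covers the diffeomorphism $f_t:Q\to Q$.

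The $p$-equation is affine in $p$ with time-dependent coefficients along each trajectory, so $\phi_t$ is fibre-wise affine. Consequently there exists a time-dependent $1$-form $\beta_t$ on $Q$ with $\beta_0 = 0$ such that $\phi_t = \tau_{\beta_t}\circ f_t^{\sharp}$. The identification of the homogeneous fibre part as $f_t^{\sharp}$ uses the classical fact that the Hamiltonian $X^{i}p_{i}$ generates precisely the cotangent lift of the flow of $X$ (equivalently, $f_t^{\sharp}$ is the unique fibre- and zero-section-preserving symplectomorphism covering $f_t$, via Corollary~3.31 of \cite{bates1997lectures}). Since $\phi_t$ and $f_t^{\sharp}$ are both symplectomorphisms (the former as a Hamiltonian flow, the latter as a cotangent lift), so is $\tau_{\beta_t}$; a one-line computation using $\tau_{\beta_t}^{*}\theta = \theta + \pi^{*}\beta_t$ (where $\theta$ is the tautological $1$-form) gives $\tau_{\beta_t}^{*}\omega = \omega - \pi^{*}d\beta_t$, hence $d\beta_t = 0$.

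The central step, which I expect to be the main obstacle, is to upgrade closedness of $\beta_t$ to global exactness. Differentiating the ansatz $\phi_t = \tau_{\beta_t}\circ f_t^{\sharp}$ in $t$ and matching with the Hamilton equation for $\phi_t$ yields the transport equation
\[
\partial_t \beta_t + \mathcal{L}_{X}\beta_t = -\,dV, \qquad \beta_0 = 0,
\]
on $Q$. Passing to de Rham cohomology, Cartan's identity $\mathcal{L}_X = d\iota_X + \iota_X d$ gives $[\mathcal{L}_X\beta_t] = 0$ on closed forms, while $[dV] = 0$; hence $[\beta_t]$ is constant in $t$ and equals $[\beta_0] = 0$, so $\beta_t$ is globally exact irrespective of $H^{1}(Q;\mathbb{R})$. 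Writing $\beta_t = d\chi_t$ reduces the transport equation to the scalar PDE $\partial_t \chi_t + X\chi_t = -V$ with $\chi_0 = 0$, which has the explicit characteristic solution $\chi_t(f_t(q_0)) = -\int_0^t V(f_s(q_0))\,ds$.

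This delivers $\phi_t = \tau_{d\chi_t}\circ f_t^{\sharp}$. Graph-preservation then follows either by the `$\Leftarrow$' direction of Theorem~\ref{thm:preserve_legendre}, or directly from $(\tau_{d\chi_t}\circ f_t^{\sharp})(\Gamma_\alpha) = \Gamma_{f_t^{\sharp}\alpha + d\chi_t}$, which remains the graph of a closed $1$-form whenever $\alpha$ is closed.
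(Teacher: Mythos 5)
Your proposal is correct and reaches the same normal form and the same explicit generating function $\chi_t(f_t(q_0)) = -\int_0^t V(f_s(q_0))\,ds$ as the paper, but the route to exactness is genuinely different. The paper works entirely at the level of the pulled-back momentum $\tilde{p}(t) := f_t^{*}p(t)$: a direct coordinate computation shows that the terms involving $\partial A^j/\partial q^i$ cancel, leaving $\frac{d}{dt}\tilde{p}_i = -\partial_{q_0^i}\bigl[V(q(t))\bigr]$, which integrates to $\tilde p(t) = p_0 - d_{q_0}\int_0^t V(q(\tau))\,d\tau$; exactness of the fibre translation is then automatic because the correction term is manifestly a differential of a scalar, with no separate closedness or cohomology step needed. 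You instead posit the ansatz $\phi_t = \tau_{\beta_t}\circ f_t^{\sharp}$ from fibre-affineness of the $p$-equation, extract closedness of $\beta_t$ from symplecticity of $\tau_{\beta_t}$, derive the transport equation $\partial_t\beta_t + \mathcal{L}_X\beta_t = -dV$, and upgrade closedness to exactness cohomologically before solving by characteristics. Both arguments are sound (I checked that your transport equation is the correct coordinate identity $(\partial_t\beta_t)_i + X^j\partial_j(\beta_t)_i + (\beta_t)_j\partial_i X^j = -\partial_i V$, and that $\partial_t\beta_t = -d(\iota_X\beta_t + V)$ is exact, so $\beta_t = \beta_0 + \int_0^t \partial_s\beta_s\,ds$ is exact regardless of $H^1(Q;\mathbb{R})$). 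What your version buys is a more structural, coordinate-free organization that isolates exactly where each hypothesis is used and makes transparent that no topological assumption on $Q$ is needed; what the paper's version buys is brevity, since the exact form appears directly from the integration without any appeal to de Rham cohomology. Your final appeal to either Theorem~\ref{thm:preserve_legendre} or the identity $(\tau_{d\chi_t}\circ f_t^{\sharp})(\Gamma_\alpha) = \Gamma_{f_t^{\sharp}\alpha + d\chi_t}$ for graph preservation is also fine, and in fact makes explicit a step the paper leaves implicit.
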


\begin{proof}
    A Hamiltonian that is at most linear in $p$ can be written as
    \begin{align}
    \label{eqn:linear_p_Hamiltonian}
        H(q,p) = \langle p,A(q)\rangle + V(q),
    \end{align}
    where $A$ is a complete vector field on $Q$ and $V:Q\rightarrow \mathbb{R}$ is a smooth function. 

    In local Darboux coordinates $(q^i, p_i)$ on $T^*Q$,  we can write Equation~\eqref{eqn:linear_p_Hamiltonian} as
    \[
    H(q,p) = p_iA^i(q) + V(q).
    \]
    The flow $\phi_t$ of $H$ is determined by the Hamiltonian equations, broken down into two parts:
    \begin{align}
    \label{eqn:dot_q}
        \dot{q}^i = \frac{\partial H}{\partial p_i} &= \frac{\partial}{\partial p_i} \left(p_jA^j(q) + V(q)\right) = A^i(q),
    \end{align}
    \begin{align}
    \label{eqn:dot_p}
        \dot{p}_i = -\frac{\partial H}{\partial q_i} &= -\frac{\partial}{\partial q_i} \left(p_jA^j(q) + V(q)\right) = -p_j\frac{\partial A^j}{\partial q^i} - \frac{\partial V}{\partial q^i}.
    \end{align}

    From Equation~\eqref{eqn:dot_q}, we define $f_t:Q\rightarrow Q$ to be the flow generated by complete vector field $A$. Hence $f_t$ is a diffeomorphism on $Q$ and for all initial point $q_0\in Q$, the curve $q(t) = f_t(q_0)$ is the unique solution to the differential equation $\dot{q} = A(q)$ with the initial condition $q(0) = q_0$.

    Fix $q_0 \in Q$, let $\tilde{p}(t)$ be the curve of covectors in $T^*Q$ obtained by pulling $p(t)$ back along $f_t$ from $q(t) = f_t(q_0)$ to $q_0$. In particular, we define the pullback covector $\tilde{p}(t) := \left(f_t\right)^*p(t)$ and more specifically 
    \begin{align}
        \label{eqn:p_tilde}
        \tilde{p}_i(t) := \sum_j p_j(t)\frac{\partial q^j(t)}{\partial q_0^i} .
    \end{align}
    In these two lines the summation indices are summed explicitly for clarity and the Einstein summation convention is omitted. By construction,
    \begin{align}
    \label{eqn:p_j_t}
        p_j(t) &=p_j(t) \cdot \operatorname{Jac}\left(f_t\right)\cdot \operatorname{Jac}\left(f_t^{-1}\right) =  \sum_k p_k(t)\cdot \frac{\partial q^k(t)}{\partial q_0^i}\cdot \frac{\partial q_0^i}{\partial q^j(t)} \nonumber \\
        &= \sum_i \tilde{p}_i(t)\cdot \frac{\partial q_0^i}{\partial q^j(t)}.
    \end{align}
    Evaluating at $q_0$ and omitting the dependency on $t$ for simplicity, we get
    \begin{align}
    \label{eqn:dt_tilde_p_i}
        \frac{\partial }{\partial t} \tilde{p}_i &= \dot{p}_j \cdot \frac{\partial q^j}{\partial q_0^i} + p_j \frac{\partial }{\partial t} \frac{\partial q^j }{\partial q_0^i} \nonumber\\
        &= \left[ -p_k\frac{\partial A^k}{\partial q^j} - \frac{\partial V}{\partial q^j}\right]\cdot \frac{\partial q^j}{\partial q_0^i} + p_j \frac{\partial }{\partial t} \frac{\partial q^j }{\partial q_0^i}, 
    \end{align}
     where the last equality is due to Equation~\eqref{eqn:dot_p}. Moreover, by Equation~\eqref{eqn:dot_q} since $\dot{q}^j = A^j(q)$ the second part of Equation~\eqref{eqn:dt_tilde_p_i} can be expanded as
    \begin{align*}
        \frac{\partial }{\partial t} \frac{\partial q^j }{\partial q_0^i} = \frac{\partial}{\partial q_0^i}\left(\dot{q}^j\right) = \frac{\partial}{\partial q_0^i}\left(A^j(q)\right) = \frac{\partial A^j}{\partial q^k}\frac{\partial q^k}{\partial q_0^i}.
    \end{align*}
    Hence we get:
    \begin{align*}
        \frac{\partial }{\partial t} \tilde{p}_i 
        &= \left[ -p_k\frac{\partial A^k}{\partial q^j} - \frac{\partial V}{\partial q^j}\right]\cdot \frac{\partial q^j}{\partial q_0^i} + p_j \left[\frac{\partial A^j}{\partial q^k}\frac{\partial q^k}{\partial q_0^i}\right].
    \end{align*}
    By symmetry we can swap the arbitrary dummy indices $j,k$ of the Einstein summation convention in the first part, and get
    \begin{align*}
        \frac{\partial }{\partial t} \tilde{p}_i 
        &=  -p_j\left[\frac{\partial A^j}{\partial q^k}\cdot \frac{\partial q^k}{\partial q_0^i}\right] - \frac{\partial V}{\partial q^k}\cdot \frac{\partial q^k}{\partial q_0^i} + p_j \left[\frac{\partial A^j}{\partial q^k}\frac{\partial q^k}{\partial q_0^i}\right] \\
        &= - \frac{\partial V}{\partial q^k}\cdot \frac{\partial q^k}{\partial q_0^i} = -\frac{\partial }{\partial q_0^i}\left[V(q(t))\right].
    \end{align*}
    Now by the Fundamental Theorem of Calculus,
    \begin{align}
    \label{eqn:tilde_p_int_1}
        \tilde{p}_i(t) - \tilde{p}_i(0) &= -\int_0^t \frac{\partial }{\partial q_0^i}V(q(\tau))d\tau \nonumber \\ 
        &= -\frac{\partial }{\partial q_0^i}\left[\int_0^t V(q(\tau))d\tau \right].
    \end{align}

    At $t = 0$, $q(0) = f_0(q_0) = q_0$, hence by Equation~\eqref{eqn:p_tilde}, $\tilde{p}_i(0) = p_j(0)\cdot \frac{\partial q^j_0}{\partial q_0^i} = p_j(0)\cdot\delta^j_i = (p_0)_i$. Therefore Equation~\eqref{eqn:tilde_p_int_1} becomes
    \begin{align*}
    \tilde{p}_i(t)= (p_0)_i-\frac{\partial }{\partial q_0^i}\left[\int_0^t V(q(\tau))d\tau\right] .
    \end{align*} 

    By Equation~\eqref{eqn:p_j_t}, hence
    \begin{align*}
        p_j(t) &= \tilde{p}_i(t)\cdot \frac{\partial q_0^i}{\partial q^j(t)} \nonumber\\
        &= (p_0)_i\cdot \frac{\partial q_0^i}{\partial q^j(t)}  -\frac{\partial }{\partial q_0^i}\left[\int_0^t V(q(\tau))d\tau\right] \frac{\partial q_0^i}{\partial q^j(t)} \nonumber .
        \end{align*}
    Since $\frac{\partial q_0^i}{\partial q^j(t)} = \operatorname{Jac}(f_t^{-1})_{ij}$, the above expression becomes:
    \begin{align}
    \label{eqn:tilde_p_int_3}
        p_j(t)& = \left((f_t^{-1})^* p_0\right)_j - \left(\frac{\partial }{\partial q_0^i} \left[\int_0^t V(q(\tau))d\tau\right]\right) \frac{\partial q_0^i}{\partial q^j(t)}\nonumber \\
        & =\left((f_t^{-1})^* p_0\right)_j -\frac{\partial}{\partial q^j(t)} \left[\int_0^t V(q(\tau))d\tau\right],
    \end{align}
    where we recall $f_t^\sharp(q_0,p_0) = \left(f_t(q_0),(f_t^{-1})^*p_0\right)$. In particular, $\left((f_t^{-1})^* p_0\right)_j$ is the $j^{th}$ coordinate of the momentum part of $f_t^\sharp$ acting on $(q_0, p_0)$.
    
    Now define $\chi_t(q)$ to be the following function: 
    \begin{align}
        \label{eqn:chi_t}
        \chi_t(q) := - \int_0^t V\left(f_{\tau-t}(q)\right)d\tau,
    \end{align}
    then 
    \begin{align*}
        - \frac{\partial}{\partial q^j(t)} \left[\int_0^t V(q(\tau))d\tau\right] = \frac{\partial \chi_t}{\partial q^j}\left(q(t)\right).
    \end{align*}

    Therefore in vector form, Equation~\ref{eqn:tilde_p_int_3} becomes
    \begin{align}
        \label{eqn:tilde_p_int_4}
        p(t) = \left(f_t^{-1}\right)^*p_0 + d\chi_t(q(t)).
    \end{align}

    By definition of the Hamiltonian flow $\phi_t$, we have
    \begin{align*}
        \phi_t(q_0,p_0) = \left(f_t(q_0), (f_t^{-1})^*p_0 + d\chi_t\left(f_t(q_0)\right) \right), 
    \end{align*}
    which by definition of $\tau_{d\chi_t}$ and $f^\sharp_t$, we have the desired equality:
    \[
    \phi_t = \tau_{d\chi_t}\circ f_t^\sharp.
    \]
\end{proof}

\begin{remark}[Interpretation of linear-in-momentum]
The Hamiltonian of Equation~\eqref{eqn:linear_p_Hamiltonian}, $H(q,p) = \langle p,A(q)\rangle + V(q)$, in Theorem~\ref{thm:linear_hamiltonian_specialform} is used only as a generating Hamiltonian for a class of symplectomorphisms $\phi_t = \tau_{d\chi_t}\circ f^\sharp_t$ that preserve Lagrangian graphs (hence Legendre structure). Here $p$ is the canonical cotangent (dual) coordinate on $T^*Q$, and $H$ is not meant to represent a physical/mechanical energy of a particle system (in particular, it does not have a quadratic-in-$p$ kinetic term). This linear-in-momentum restriction is used to guarantee the Legendre preserving normal form in Theorem~\ref{thm:preserve_legendre}. The input-side Legendre dynamics described in Section~\ref{sec:domain} is not restricted by the linear-in-momentum condition and may have general (nonlinear) dependence on momentum.
     
\end{remark}

Finally, combining Theorem~\ref{thm:linear_hamiltonian_specialform} with the characterization in Theorem~\ref{thm:preserve_legendre} and the identification of Theorem~\ref{thm:lagrangian_legendre_type}, we obtain our main theorem:
 \begin{theorem}
 \label{thm:main}
 Any SR update arising from an input-driven Hamiltonian that is at most linear in $p$ is a cotangent lift composed with an exact fiber translation. More precisely, for each time step $t$, the SR update map is of the form $\tau_{d\chi_t}\circ f^\sharp_t$ and hence maps $L_\psi$ to $L_{\psi\circ f^{-1}+\chi}$, thereby preserving Legendre dynamics. 
\end{theorem}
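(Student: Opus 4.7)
The plan is to recognize the symplectic reservoir update of Equation~\eqref{eqn:src_linear_p} as the time-$\Delta t$ Hamiltonian flow of an input-driven Hamiltonian that is at most linear in the momentum, and then feed this directly through the two structural results already established in the paper: Theorem~\ref{thm:linear_hamiltonian_specialform} supplies the normal form $\tau_{d\chi_t}\circ f_t^{\sharp}$, and Theorem~\ref{thm:preserve_legendre} converts that normal form into Legendre-graph preservation with the explicit potential update formula.

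First I would read off from
\[
H(q,p,u) = p^\top Sq + \tfrac{1}{2}q^\top Lq - q^\top C_q u - p^\top C_p u
\]
that, on each zero-order-hold window $[k\Delta t,(k+1)\Delta t]$ on which $u\equiv u_k$ is constant, the Hamiltonian takes the form $H(q,p) = \langle p, A_{u_k}(q)\rangle + V_{u_k}(q)$ with $A_{u_k}(q) = Sq - C_p u_k$ a complete affine vector field on $Q$ and $V_{u_k}(q) = \tfrac{1}{2}q^\top Lq - q^\top C_q u_k$ smooth on $Q$. This places $H$ squarely in the hypothesis class of Theorem~\ref{thm:linear_hamiltonian_specialform}.

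Next I would identify the affine update $x_{k+1} = Wx_k + Vu_k$ with the exact Hamiltonian flow. By the variation-of-constants derivation leading to Equation~\eqref{eqn:discrete_hamiltonian_flow}, the matrices $W = e^{A\Delta t}$ and $V = A^{-1}(e^{A\Delta t}-I)B$ are precisely the fundamental solution and convolution operators of the LTI Hamilton system in Equation~\eqref{eqn:lti_hamiltonian_linear_p}. Hence the SR step map coincides with the time-$\Delta t$ flow $\phi_{\Delta t}$, and Theorem~\ref{thm:linear_hamiltonian_specialform} forces the factorization
\[
\phi_{\Delta t} = \tau_{d\chi_t}\circ f_t^{\sharp},
\]
with $f_t$ the base flow of $A_{u_k}$ on $Q$ and $\chi_t$ the integral potential of Equation~\eqref{eqn:chi_t}. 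Invoking Theorem~\ref{thm:preserve_legendre} then yields preservation of Legendre type structure, while the explicit transport computation in Equation~\eqref{eqn:expand_F} gives the image rule $L_\psi \mapsto L_{\psi\circ f_t^{-1} + \chi_t}$.

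The hard part is essentially bookkeeping rather than geometry: one must check that the Euclidean matrix update genuinely matches the coordinate-free flow description of Section~\ref{sec:preservation}, and that the input dependence can be absorbed cleanly so that Theorem~\ref{thm:linear_hamiltonian_specialform}, stated for time-independent Hamiltonians, still applies step by step under zero-order hold. Once that identification is in place, induction on $k$ transports any initial Legendre graph $L_{\psi_0}$ through the recursion $\psi_{k+1} = \psi_k \circ f_k^{-1} + \chi_k$, which is exactly the statement that SR preserves Legendre dynamics in the sense of Definition~\ref{defn:Legendre_dynamics_SP}.
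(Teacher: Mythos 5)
Your proposal is correct and follows essentially the same route as the paper, which likewise obtains the result by identifying the SR step with the exact time-$\Delta t$ flow of the piecewise-constant-input Hamiltonian and then invoking Theorem~\ref{thm:linear_hamiltonian_specialform} for the normal form $\tau_{d\chi_t}\circ f_t^{\sharp}$ and Theorem~\ref{thm:preserve_legendre} (via Theorem~\ref{thm:lagrangian_legendre_type}) for the transport rule $L_\psi\mapsto L_{\psi\circ f^{-1}+\chi}$. In fact you make explicit two points the paper leaves implicit, namely the identification $A_{u_k}(q)=Sq-C_pu_k$, $V_{u_k}(q)=\tfrac12 q^\top Lq-q^\top C_qu_k$ placing the frozen-input Hamiltonian in the hypothesis class of Theorem~\ref{thm:linear_hamiltonian_specialform}, and the zero-order-hold argument that reduces the input-driven case to a time-independent Hamiltonian on each step.
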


Therefore, under the identification of Legendre duality with Lagrangian graphs (Theorem~\ref{thm:lagrangian_legendre_type}), any SR built from such a Hamiltonian provides a representation map that preserves the Legendre dynamics of Section~\ref{sec:domain}.

    


\subsection{Generic linear symplectic reservoirs}

We first recall a generic linear symplectic reservoir obtained by exact
discretization of a quadratic input-dependent Hamiltonian. Let $x := \binom{q}{p} \in \mathbb{R}^{2n}$ denote the state variable representing the generalized position and generalized momentum respectively. Consider \textbf{input-dependent} Hamiltonian:
\begin{align}
    \label{eqn:hamiltonian}
    H(x,u) = \underbrace{\frac{1}{2}x^\top Mx}_{\text{Internal Energy}} - \underbrace{x^\top C u}_{\text{Interaction Energy}} ,
\end{align}
where $M = M^\top \in \mathbb{M}_{2n}\left(\mathbb{R}\right)$ is symmetric positive-definite, $C$ is a input-to-state coupling matrix. With the canonical symplectic matrix $J$, Hamilton's equations
give
\[
\dot x = JMx-JCu=:Ax+Bu, \qquad J:= \begin{bmatrix}
        0 & I_n \\
        -I_n & 0
    \end{bmatrix}
\]
Under zero order hold condition on the input over one time step $\Delta t$, the
exact discretization is given by
\[
x_{k+1}=Wx_k+W_{\rm in}u_k,
\]
where
\begin{align}
    \label{eqn:linear_sr_AB}
    W=e^{A\Delta t},\qquad
W_{\rm in}=\int_0^{\Delta t}e^{As}B\,ds.
\end{align}

Since $A\in\mathfrak{sp}(2n)$, we have $W\in \operatorname{Sp}(2n)$. Thus, for each
fixed input $u_k$, the affine map
\[
F_{u_k}(x)=Wx+W_{\rm in}u_k
\]
is an affine symplectomorphism. We summarize formally as follows:
\begin{theorem}
Consider the linear reservoir system obtained from the input-dependent Hamiltonian:
\begin{align*}
        H(x,u) = \frac{1}{2}x^\top Mx - x^\top C u,
    \end{align*}
    with $M$ symmetric positive-definite and $C$ a coupling matrix such that:
    \[
    A = JM,\, B = -JC,\, W = e^{A\Delta t}, \, W_{\rm in}=\int_0^{\Delta t}e^{As}B\,ds.
    \]

    Then for every input $u_t$, the affine map:
    \[
    F_{u_t}(x)  = Wx + W_{in}u_t
    \]
    is an affine symplectomorphism of $\left(\mathbb{R}^{2n}, \omega\right)$ with
    \[
    F^*_u \omega(v,w) = \omega(Wv,Ww).
    \]

\end{theorem}

\begin{proof}

   By construction, the translation term $W_{\operatorname{in}} u$ is independent of $x$, the derivative of $F_u$ with respect to $x$ is
    \[
    D_xF_u=W,
    \]
 Hence
\[
F_u^*\omega(v,w)=\omega(D_xF_u v,D_xF_u w)
=\omega(Wv,Ww).
\]
Since $W\in \operatorname{Sp}(2n)$, we have $\omega(Wv,Ww)=\omega(v,w)$. Therefore
$F_u^*\omega=\omega$ and $F_u$ is an affine symplectomorphism.

\end{proof}


The construction above gives a generic linear symplectic reservoir. It is included
to separate the notion of symplecticity from Legendre preservation: a generic affine symplectomorphism need not preserve Legendre graphs. The Legendre preserving normal form of Theorem~\ref{thm:linear_hamiltonian_specialform} and Theorem~\ref{thm:main} is obtained by imposing the stronger Hamiltonian condition that the Hamiltonian be at most linear in the momentum. We now specialize to this Legendre-preserving subclass.

\subsection{Linear Realization of Symplectic Reservoir}

\label{sec:linear-sr}

Now with this necessary and sufficient condition we are ready to construct a first example: a linear symplectic reservoir from Hamiltonian mechanics that preserves the Legendre structure of the input. The following ``linear realization'' serves as a baseline as it is linear in both the momentum $p$ and in the base variable $q$. The nonlinear HSR of the next subsection replaces this linear base dynamics by a bounded nonlinear velocity field.

Consider input-dependent Hamiltonian given by
\begin{align*}
    H(q,p,u) = p^\top Sq + \frac{1}{2}q^\top Lq  - \left(q^\top C_q u + p^\top C_p u \right),
\end{align*}
where $L = L^\top,S\in \mathbb{M}_{n\times n}\left(\mathbb{R}\right)$, $ C_q,C_p \in \mathbb{M}_{n\times m}\left(\mathbb{R}\right)$ are coupling matrices that define how the input influences the state's energy.  By Hamilton's equations,
\begin{align*}
    \frac{d}{dt}q &= \frac{\partial H}{\partial p} = Sq - C_pu\\
    \frac{d}{dt}p &= -\frac{\partial H}{\partial q} = -\left[Lq +S^\top p  - C_q u\right].
\end{align*}
Hence in matrix form, the time evolution of the system is given by
\begin{align}
    \label{eqn:lti_hamiltonian_linear_p}
    \frac{dx(t)}{dt} = \frac{d}{dt}\begin{bmatrix}
        q \\ p
    \end{bmatrix}
     = \begin{bmatrix}
         S & 0 \\
         -L & -S^\top
     \end{bmatrix}
     \begin{bmatrix}
        q \\ p
    \end{bmatrix}
    + 
    \begin{bmatrix}
        -C_p \\ C_q
    \end{bmatrix}u =: Ax+Bu, 
\end{align}
where we set
\begin{align*}
    A:= \begin{bmatrix}
        S & 0 \\
        -L & -S^\top
    \end{bmatrix}, \quad 
    B:=\begin{bmatrix}
        -C_p \\ C_q
    \end{bmatrix}.
\end{align*}

By construction $A \in \mathfrak{sp}(2n,\mathbb{R})$ since
\begin{align*}
A^\top J &= \begin{bmatrix}
    S^\top & -L \\
    0 & -S
\end{bmatrix}
\begin{bmatrix}
    0 & I \\
    -I & 0
\end{bmatrix} = 
\begin{bmatrix}
    L & S^\top \\
    S & 0
\end{bmatrix} \\
JA &= \begin{bmatrix}
    0 & I \\
    -I & 0
\end{bmatrix}\begin{bmatrix}
    S & 0 \\
    -L & -S^\top
\end{bmatrix} = \begin{bmatrix}
    -L & -S^\top \\
    -S & 0
\end{bmatrix}
\end{align*}
Hence $A^\top J + JA = 0$ and $A \in \mathfrak{sp}(2n,\mathbb{R})$. We therefore obtain the linear Hamiltonian SR update in the form of Equation~\eqref{eqn:linear_sr_AB} with

\begin{align}
\label{eqn:src_linear_p}
    x_{k+1} = Wx_k + W_{\operatorname{in}}u_k, \quad \text{where }
    \begin{cases}
        W := e^{A\Delta t} \\
        W_{\operatorname{in}} = \int_0^{\Delta t}e^{As}B\,ds.
    \end{cases}
\end{align}
When $A$ is invertible, we have $W_{\rm in} = A^{-1} \left(e^{A\Delta t} - I\right)B$. The linear realization above is symplectic and lies in the linear-in-momentum Hamiltonian class required for Legendre preservation. Its generator has block form
\[
A=
\begin{pmatrix}
S&0\\
-L&-S^\top
\end{pmatrix},
\]
so its eigenvalues are those of $S$ and $-S^\top$. Thus, for generic choices of $S$, the matrix $e^{A\Delta t}$ has reciprocal expanding and contracting directions, and the undriven trajectory can exhibit growth along expanding directions. This fixed matrix state iteration mechanism motivates the nonlinear realization below.

\subsection{Nonlinear Hamiltonian Symplectic Reservoir}
\label{sec:nonlinear-sr}

We now introduce a nonlinear Hamiltonian realization of the Legendre preserving normal form of Theorem~\ref{thm:preserve_legendre}. The construction applies Theorem~\ref{thm:linear_hamiltonian_specialform}
to a Hamiltonian that is linear in the momentum but nonlinear in the
configuration variable. In particular, the nonlinearity enters through the configuration velocity field, rather than through an arbitrary activation applied to the full phase-space state. This keeps the update in the class of Legendre-preserving symplectomorphisms.

Throughout this subsection, the symbols $C,W,W_{\operatorname{in}}$ are local to the nonlinear parameterization and are unrelated to the matrices used in the linear realization of Section~\ref{sec:linear-sr}.

Recall Equation~\eqref{eqn:linear_p_Hamiltonian} from  Theorem~\ref{thm:linear_hamiltonian_specialform} and consider the parameterized family


\begin{definition}
\label{defn:nhsr}
Let $Q=\mathbb{R}^n$ and $u\in\mathbb{R}^m$. A \textbf{nonlinear Hamiltonian Symplectic Reservoir (N-HSR)} is the input driven Hamiltonian system on $T^*Q$ generated by
\begin{equation}
\label{eq:nlsr-hamiltonian}
H(q, p, u) = \langle p,\, \mathcal{A}(q, u)\rangle + V(q,u)
\end{equation}
where
\[ \mathcal{A} (q,u)
=
C \tanh(W  q+ W_{\operatorname{in}}  u )
\]
is a  smooth bounded configuration space velocity field and $V :Q\times\mathbb{R}^m\to\mathbb{R}$ is a smooth scalar potential. $\tanh : \mathbb{R}^{n_{\operatorname{int}}} \to \mathbb{R}^{n_{\operatorname{int}}}$ acts componentwise  and the linear maps
\[
C : \mathbb{R}^{n_{\operatorname{int}}} \to \mathbb{R}^n, \qquad
W : \mathbb{R}^n \to \mathbb{R}^{n_{\operatorname{int}}}, \qquad
W_{\operatorname{in}} : \mathbb{R}^m \to \mathbb{R}^{n_{\operatorname{int}}}
\]
are parameters of the family. Here $n_{\rm int}\ge 1$ denotes the dimension of the intermediate space on which the componentwise nonlinearity acts, $n$ is the configuration dimension, and $m$ is the input dimension. 
\end{definition}


By construction, the configuration space velocity field $\mathcal{A} : Q \times \mathbb{R}^m \to TQ$ is $C^\infty$ and globally Lipschitz in~$q$. Let $\norm{\cdot }_{\operatorname{op}}$ denote the operator norm, for any $u \in \mathbb{R}^m$ and $q, q' \in Q$, 

\begin{equation}\label{eq:nlsr-lipschitz}
\norm{\mathcal{A}(q, u) - \mathcal{A}(q', u)} \leq L_{\mathcal{A}} \norm{q - q'}, \qquad
L_\mathcal{A} := \norm{C}_{\operatorname{op}} \cdot\norm{W}_{\operatorname{op}}.
\end{equation}

This is due to the inequality:
\begin{align*}
    \norm{\mathcal{A}(q,u) - \mathcal{A}(q',u)} \leq \sup_{z \in Q} \norm{D_q \mathcal{A}(z, u)}_{\operatorname{op}} \cdot \norm{q - q'}.
\end{align*}
The Jacobian over $q$ given by
\begin{align}
\label{eqn:CW_La}
D_q \mathcal{A}(q, u) = C\cdot\operatorname{diag}\left(\operatorname{sech}^2(Wq + W_{\operatorname{in}} u)\right)\cdot W,
\end{align}
hence $\norm{D_q \mathcal{A}(q, u)}_{\operatorname{op}} \leq \norm{C}_{\operatorname{op}} \cdot\norm{W}_{\operatorname{op}}$ for all $(q, u)$, since $\norm{\operatorname{diag}(\operatorname{sech}^2(\cdot))}_{\operatorname{op}} \leq 1$.

Thus the Hamilton's equations for~\eqref{eq:nlsr-hamiltonian} becomes
\begin{align}
\frac{d}{dt} q &= \mathcal{A}(q, u) = C\cdot\tanh(Wq + W_{\operatorname{in}} u),
\label{eq:nlsr-qdot}\\
\frac{d}{dt} p &= -\left(D_q \mathcal{A}(q, u)\right)^\top p -\nabla_q V(q,u). 
\label{eq:nlsr-pdot}
\end{align}

By Theorem~\ref{thm:linear_hamiltonian_specialform}, the N-HSR preserves Legendre dynamics, more formally:
 
\begin{theorem}
\label{thm:nlsr-legendre}
For each fixed $u \in \mathbb{R}^m$, the $\Delta t = 1$ Hamiltonian flow $\varphi_u : T^*Q \to T^*Q$ of the N-HSR (Definition~\eqref{defn:nhsr}) is a Legendre graph preserving symplectomorphism given by the normal form
\[
\varphi_u=\tau_{d\chi_u}\circ f_u^\sharp,
\]
where $f_u$ is the time-one flow of
\[
\dot q= \mathcal{A}(q,u),
\]
and
\[
\chi_u(q)
=
-\int_0^1 V(f_{\tau-1,u}(q),u)\,d\tau.
\]
Consequently,
\[
\varphi_u(L_\psi)=L_{\psi\circ f_u^{-1}+\chi_u}.
\]
\end{theorem}

\begin{proof}
For fixed $u$, the Hamiltonian (Equation~\eqref{eq:nlsr-hamiltonian}) is at most linear in $p$. By Equation~\eqref{eq:nlsr-lipschitz}, for each $u \in \mathbb{R}^m$, the configuration space velocity field $\mathcal{A}(\cdot, u)$ is globally Lipschitz, hence its integral curves exist for all $t \in \mathbb{R}$ by Picard--Lindel\"of  \cite{hartman1964}, and so $\mathcal{A}(\cdot, u)$ is a complete vector field. 

The $\Delta t= 1$ flow $f_u : Q \rightarrow Q$ is therefore a $C^\infty$ diffeomorphism (with inverse being the $\Delta t= 1$  flow of~$-\mathcal{A}(\cdot, u)$,  also globally Lipschitz and hence complete). The hypotheses of Theorem~\ref{thm:linear_hamiltonian_specialform} are thus satisfied by $H_u(q,p) := H(q,p,u)$.
     Explicitly, we have
\begin{align*}
    f^\sharp_u(q, p) = \left(f_u(q), (Df_u(q))^{-\top} p\right), 
\end{align*}

The expression for $\chi_u$ follows from Equation~\eqref{eqn:chi_t} to the $\Delta t = 1$ flow with fixed input $u$. The final equation follows from Theorem~\ref{thm:main}.

\end{proof}

\subsubsection{Discrete-time implementation}
\label{sec:nlsr-discrete}

For an input sequence $\{u_k\}$, the continuous-time update
$\varphi_{u_k}$ is implemented by first computing a numerical approximation
$\hat f_{u_k}$ of the $\Delta t= 1$ configuration flow (e.g. integrating the configuration ODE (Equation~\eqref{eq:nlsr-qdot}) numerically). The corresponding normal-form lift is
\begin{align*}
q_{k+1} &= \hat f_{u_k}(q_k) \approx f_{u_k}(q_k),\\
p_{k+1} &= D\hat f_{u_k}(q_k)^{-\top}p_k + d\hat\chi_{u_k}(q_{k+1}).
\end{align*}

When $V\equiv0$, this reduces to the cotangent lift
\[
p_{k+1}=D\hat f_{u_k}(q_k)^{-\top}p_k.
\]
Thus numerical error changes the realized diffeomorphism $\hat f_u$,
but the graph-preservation property holds for the realized map whenever
the fibre update is implemented in the above lifted normal forms.

\subsubsection{Finite-horizon growth bounds}
\label{sec:nlsr-boundedness}
 
Theorem~\ref{thm:nlsr-legendre} establishes Legendre graph preservation of nonlinear Hamiltonian SR.  The linear realization of Section~\ref{sec:linear-sr} can exhibit growth along expanding directions since its generator has block form
\[
A=\begin{pmatrix}S&0\\-L&-S^\top\end{pmatrix},
\]
so $e^{A\Delta t}$ generically has reciprocal expanding and contracting directions. Here we show that the reservoir cotangent bundle state $(q, p)$ of nonlinear Hamiltonian SR of Equation~\eqref{eq:nlsr-hamiltonian} satisfies explicit finite-horizon growth bounds. In particular, the N-HSR architecture of Equation~\eqref{eq:nlsr-hamiltonian} gives explicit finite-horizon growth bounds for both $q$ and $p$ without imposing a spectral radius contraction condition. 

For the finite-horizon growth estimate we specialize to the pure cotangent-lift case $V\equiv0$, which is also the version used in the numerical experiments.

The full $V \neq 0$ case still preserves Legendre graphs by Theorem~\ref{thm:nlsr-legendre}, where its momentum equation contains an additional forcing term involving $\nabla_qV$. We omit this extension here because the purpose of the estimate is to contrast the nonlinear N-HSR realization with the fixed-matrix growth mechanism of the linear SR realization.

\begin{proposition}
\label{prop:nlsr-boundedness}
For any $u \in \mathbb{R}^m$, any initial condition $(q_0, p_0) \in L_{\psi_0}$ with $\psi_0 \in C^2$, and any $t \geq 0$. Suppose $V\equiv 0$, then the continuous-time trajectory
of Equations~\eqref{eq:nlsr-qdot}--\eqref{eq:nlsr-pdot} satisfies
\[
\norm{q(t) - q_0} \leq t \cdot \norm{C}_{\operatorname{op}}\sqrt{n_{\operatorname{int}}},
\quad
\norm{p(t)} \leq e^{L_\mathcal{A}\, t}\,\norm{d\psi_0(q_0)},
\]
where $L_\mathcal{A} = \norm{C}_{\operatorname{op}}\cdot\norm{W}_{\operatorname{op}}$. In the discretized case, after $k$ successive applications of the time-$1$ flow (each with a possibly different input~$u_j$), $\|q_k - q_0\| \leq k\,\|C\|_{\operatorname{op}}\,\sqrt{n_{\operatorname{int}}}$
and $\|p_k\| \leq e^{L_\mathcal{A}\, k}\,\|d\psi_0(q_0)\|$.
\end{proposition}
 
\begin{proof}
\begin{description}
    \item[Configuration space bound on $q$:] 

    Since $\vert \tanh(z_i)\vert \leq 1$ componentwise, $\norm{\tanh(z)}_2\leq \sqrt{n_{\operatorname{int}}}$ for any $z\in\mathbb{R}^{n_{\operatorname{int}}}$. Hence 
    \[
    \norm{\frac{d}{dt} q(t)} = \norm{C\cdot \tanh(\cdot)} \leq \norm{C}_{\operatorname{op}} \sqrt{n_{\operatorname{int}}},
    \]
    integrating over $[0, t]$ gives the desired bound. The $k$-step bound follows immediately from triangle inequality.

    \item[Momentum space bound on $p$:]

    By Theorem~\ref{thm:linear_hamiltonian_specialform}, at time $t$ with $V \equiv 0$, the time-$t$ flow is the cotangent lift $\left(f_u^t\right)^\sharp$, hence 
    \[
    p(t) = \Phi(t)^{-\top} p_0,
    \]
    where $\Phi(t)$ is a solution to $\frac{d}{dt}\Phi = D_q \mathcal{A}(q(t), u) \Phi$ with $\Phi(0) = I$. By Liouville's formula, $\det\Phi(t) = \det\Phi(0)\cdot \exp\bigl(\int_0^t \operatorname{tr}\,D_q \mathcal{A}(q(s),u)\,ds\bigr)$. Since $\det\Phi(0) = 1$ and the exponential is strictly positive, $\Phi(t)$ is invertible for all $t \geq 0$.

Set $\Psi(t) := \Phi(t)^{-1}$ for all $t$. Differentiating $\Phi\cdot\Psi = I$ gives $\frac{d}{dt}\Psi = -\Psi\,D_q \mathcal{A}(q(t), u)$ with $\Psi(0) = I$.
In integral form, we have
\[
\Psi(t) = I - \int_0^t \Psi(s)\,D_q \mathcal{A}(q(s), u)\,ds.
\]
Taking operator norms on both sides, by triangular inequality we get
\begin{align}
    \label{eqn:prop:nlsr-boundedness:1}
    \norm{\Psi(t)}_{\operatorname{op}}
    \leq 1 + \int_0^t \norm{\Psi(s)}_{\operatorname{op}}
    \norm{D_q \mathcal{A}(q(s), u)}_{\operatorname{op}} ds
    \leq 1 + L_\mathcal{A} \int_0^t \norm{\Psi(s)}_{\operatorname{op}} ds,
\end{align}
where the last equality follows from Equation~\eqref{eqn:CW_La}.
Now notice this is an inequality of the form $u(t) \leq 1 + L_\mathcal{A} \int_0^t u(s)ds$ with $u := \norm{\Psi(\cdot)}_{\operatorname{op}}$, so we can apply Gronwall Bellman inequality (where if $u(t) \leq \alpha + \int_0^t \beta u(s)ds$ for constants $\alpha, \beta \geq 0$, then
$u(t) \leq \alpha \cdot e^{\beta t}$), Equation~\eqref{eqn:prop:nlsr-boundedness:1} becomes
\[
\norm{\Psi(t)}_{\operatorname{op}} \leq e^{L_\mathcal{A}\, t}.
\]
Since $\norm{p(t)} = \norm{\Psi(t)^\top p_0} \leq \norm{\Psi(t)}_{\operatorname{op}}\,\norm{p_0}$, the single step bound follows. For the discrete $k$-step bound, each application of the time-$1$ flow will contribute a factor
$\norm{\Psi_j}_{\operatorname{op}} \leq e^{L_\mathcal{A}}$, and by submultiplicativity of the operator norm gives the desired bound
\[
\norm{p_k} \leq e^{L_\mathcal{A}\, k}\,\norm{d\psi_0(q_0)}.
\]
\end{description}
\end{proof}
 

The bounds of Proposition~\ref{prop:nlsr-boundedness} are finite horizon growth bounds for the pure cotangent-lift case $V \equiv 0$ and they do not imply the classical echo state property or uniform contraction of the  phase space, in the sense that they hold at the continuous level and any numerical implementation preserves the graph structure for the realized map when the fibre update is implemented in lifted normal form.
Indeed, exact symplectic and cotangent-lift maps preserve the Liouville volume form and are generally incompatible with global phase-space contraction. The guaranteed property studied here is instead Legendre-graph preservation. The nonlinear HSR supplements this structural guarantee with explicit finite-horizon growth estimates for bounded inputs. The estimates control the size of trajectories on any given finite time interval. They are not asymptotic boundedness or contraction statements: the constants in these bounds depend explicitly on the design parameters $\norm{C}_{\rm op}$ and $\norm{W}_{\rm op}$. $\norm{C}_{\rm op}$ sets the linear configuration drift and $\norm{C}_{\rm op}\cdot \norm{W}_{\rm op}$ sets the momentum growth rate, so the finite-horizon growth can be controlled at the architectural level.

Recent reservoir computing theory also treats the echo state property, fading memory, state and input forgetting, and solution stability as distinct notions  \cite{ortega2025echoes, ortega2026stochastic}. The estimates above are finite-horizon growth estimates for the Legendre-preserving representation, not a proof of classical ESP.

\subsubsection{Comparison with the eigenvalue structure of the linear
realization}
\label{sec:nlsr-eigenvalue-comparison}

For the linear SR realization of Section~\ref{sec:linear-sr}, the update has the form
\[
x_{k+1}=Wx_k+W_{\operatorname{in}}u_k,
\qquad
W=e^{A\Delta t},
\]
where
\[
A=
\begin{pmatrix}
S&0\\
-L&-S^\top
\end{pmatrix}.
\]
so the eigenvalues of $e^{A\Delta t}$ occur in reciprocal expanding
and contracting directions for generic $S$. This is the fixed-matrix
state-iteration mechanism that the nonlinear N-HSR avoids. 


The nonlinear HSR is different, where its state update is
\[
x_{k+1}=\varphi_{u_k}(x_k),\qquad x_k=(q_k,p_k),
\]
where
\[
\varphi_{u_k}=\tau_{d\chi_{u_k}}\circ f_{u_k}^{\sharp}.
\]
The derivative
\[
D\varphi_{u_k}(z_k)
\]
acts on {infinitesimal perturbations} $(\delta q, \delta p)$ around the current state, not on the state $x_k$ itself.  For the cotangent lift part,
\[
D(f_u^\sharp)(q,p)
=
\begin{pmatrix}
Df_u(q) & 0\\
* & Df_u(q)^{-\top}
\end{pmatrix}.
\]
Thus whilst the local reciprocal pair structure remains, it describes linearized sensitivity rather than direct state iteration. Over multiple time steps the linearized dynamics is a product of state and input dependent matrices,
\[
D\Phi_k(z_0)
=
D\varphi_{u_{k-1}}(z_{k-1})\cdots D\varphi_{u_0}(z_0),
\]
rather than $W^k$. Hence the reciprocal-pair spectrum of each local
Jacobian does not imply repeated multiplication of the state by one
unstable matrix. Indeed, the finite-horizon bounds in Proposition~\ref{prop:nlsr-boundedness} show how the nonlinear HSR avoids the fixed matrix state iteration mechanism.


\begin{remark}[Expressivity within the normal-form class]

By the ``if and only if'' argument of Theorem~\ref{thm:preserve_legendre}, the maps $\tau_{d\chi}\circ f^\sharp$ are precisely the Legendre-preserving symplectomorphisms, so this class is the ``complete'' design space for Legendre-preserving updates.  The expressivity statement established in this paper is therefore within the Legendre-preserving normal-form class; below we discuss how the linear-in-momentum Hamiltonians realize maps within it.
The two components of the normal form can be generated by Hamiltonian blocks:
\begin{enumerate}
    \item If
\[
H_A(q,p)=\langle p,A(q)\rangle,
\]
then its $\Delta t= 1$ flow is the cotangent lift $f^\sharp$ of the base flow of $A$.
\item If
\[
H_\chi(q,p)=-\chi(q),
\]
then its $\Delta t= 1$ flow is the exact fiber translation
\[
(q,p)\mapsto(q,p+d\chi(q)).
\]
\end{enumerate}

More generally, in the Hamiltonian
\[
H(q,p)=\langle p,A(q)\rangle+V(q),
\]
the scalar potential $V$ generates the exact fibre-translation term through the function $\chi$ obtained by integrating $V$ along the
base flow, as in Theorem~\ref{thm:nlsr-legendre}. Hence finite compositions of these two types of blocks realize maps of the form
\[
\tau_{d\chi}\circ f^\sharp.
\]


On compact subsets, neural network parameterizations of $A$ and $V$ can approximate smooth vector fields and scalar potentials, respectively.

Since ODE flows depend continuously on their generating vector fields, N-HSR provides an approximation mechanism within the
Legendre preserving normal-form class. A full universality theorem for
fading-memory input-output filters is beyond the scope of this paper.

\end{remark}

}

 { 
\section{Numerical illustrations}
\label{sec:exp}
We conclude the paper with numerical illustrations of the structural
Legendre preservation results. These experiments verify that the implemented N-HSR realizes the normal-form identities at the representation level. 

For reproducibility of the experiments, all experiments are CPU-based and are performed on Apple M3 Max with 128GB of RAM. The source code and data of the numerical analysis are openly available at \texttt{https://github.com/rsimonfong/Symplectic\_Legendre}.

Throughout the section we focus on the pure cotangent lift case $V\equiv0$, for which the discrete update has the form (see Section~\ref{sec:nlsr-discrete})
\[
q_{k+1}=\hat f_{u_k}(q_k),
\qquad
p_{k+1}=D\hat f_{u_k}(q_k)^{-\top}p_k.
\]
This is the $V\equiv0$ version of the normal form in Theorem~\ref{thm:nlsr-legendre}. 

The experiments verify two points:
\begin{enumerate}
    \item \emph{Local stepwise structure:} the state-space update satisfies the geometric identities predicted by the normal form (Section~\ref{sec:E1});
    \item \emph{Global trajectory-wise structure:} repeated application of the cotangent-lift update transports Legendre duality correctly along
    an OU-driven trajectory (Section~\ref{sec:E4}).
\end{enumerate}




In both experiments, the input sequence $\{u_k\}$ is generated by a stable Ornstein--Uhlenbeck (OU) process and is used only as a controlled stochastic drive for the state-space dynamics. We evaluate the resulting cotangent-bundle trajectories $(q_k,p_k)\in T^*Q$. No task readout is trained in these experiments since the purpose is to test the representation map itself.
Operational evaluation on a supervised downstream task would instead probe the readout, which lies outside the representation-level scope of this paper (Remark~\ref{rmk:readout}). We therefore restrict attention to the structural identities and task-level benchmarking is left for future work.


\subsection{Geometric defect verification}
\label{sec:E1}
The first experiment verifies that the implemented N-HSR update realizes the geometric structure described by the main normal-form results. For each random seed and architecture, we generate the same OU input sequence and sample trajectory points produced by the corresponding driven update. At each sampled point, we evaluate the residuals of the one-step identities predicted by the normal-form results from Theorems~\ref{thm:preserve_legendre},
\ref{thm:linear_hamiltonian_specialform}, and \ref{thm:nlsr-legendre}, together with the discrete lifted implementation of Section~\ref{sec:nlsr-discrete}. Let
\[
\hat{F}_u:(q,p) \mapsto (q_+,p_+)
\]
denote one numerical state space update on $T^*Q$. We write its Jacobian in block form:
\[
D\hat{F}_u(z)=
\begin{pmatrix}
D_q q_+ & D_p q_+\\
D_q p_+ & D_p p_+
\end{pmatrix}.
\]
The first identity comes from symplecticity. By Theorem~\ref{thm:nlsr-legendre}, N-HSR update is a Hamiltonian symplectomorphism, so its Jacobian should satisfy
\[
D\hat F_u(z)^\top J D\hat F_u(z)=J.
\]
We measure the residual of this identity by the symplectic defect
\[
\operatorname{SD}
=
\frac{\|D\hat F_u(z)^\top J D\hat F_u(z)-J\|_F}{\|J\|_F}.
\]

The second identity comes from the normal form in Theorem~\ref{thm:preserve_legendre}
\[
\hat F_u=\tau_{d\hat\chi_u}\circ \hat f_u^\sharp.
\]
In this form, the base coordinate evolves by
\[
q_+=\hat f_u(q),
\]
so it is independent of the incoming covector $p$. Therefore, we should have
\[
D_p q_+=0.
\]
We measure the residual of this identity by the normal-form defect
\[
\operatorname{NFD}
=
\frac{\|D_p q_+\|_F}{1+\|D\hat F_u(z)\|_F}.
\]

The third identity is on the cotangent-lift covector update. In the full normal form, the fibre update includes the exact translation $d\hat\chi_u$. In the experiments in this section we use the pure cotangent-lift case $V\equiv0$, so $d\hat\chi_u=0$. Recall the discrete lifted implementation in Section~\ref{sec:nlsr-discrete} and Theorem~\ref{thm:nlsr-legendre}, the fibre update satisfies
\[
p_+
=
D\hat f_u(q)^{-\top}p.
\]
We measure the residual of this identity by the cotangent lift defect
\[
\operatorname{CLD}
=
\frac{\|p_+-D\hat f_u(q)^{-\top}p\|}
     {1+\|p_+\|}.
\]

The numerator in each defect is zero exactly when the corresponding theoretical identity is satisfied. These quantities are averaged over sampled trajectory points and 20 random seeds. 

Table~\ref{tab:e1-geometric-defects} reports the result of this experiment where we compare three models. For each seed, all systems are driven by the same OU input sequence. The N-HSR trajectory is initialized from $z_0=(q_0,p_0)$ with $p_0=q_0$, corresponding to the quadratic Legendre graph $\psi(q)=\frac{1}{2}q^\top P q$, where $P = I$. The Linear SR uses the corresponding embedded cotangent bundle initialization, whereas the Orthogonal ESN uses the standard zero initial state. We observe 

\begin{enumerate}
    \item N-HSR ($V\equiv 0$) update has zero cotangent lift defect and zero normal-form defect up to numerical precision, with symplectic defect at the finite-difference precision floor.
    \item The Linear SR diagnostic has near zero SD and NFD, but diverges in all tested seeds. This shows that symplecticity and block normal-form structure alone do not give the finite horizon behavior of the N-HSR realization.
    \item In contrast, the scaled orthogonal ESN baseline, using a \texttt{tanh} activation with $W_{\operatorname{ESN}}=0.9\cdot Q_{\rm orth}$ and $Q_{\rm orth}$ orthogonal, has order-one geometric defects. This shows that orthogonality or contractive spectral scaling does not enforce the cotangent lift normal-form identities, and therefore does not enforce Legendre-graph preservation.
\end{enumerate}
These results confirm that the implemented N-HSR realizes the Legendre preserving normal form, rather than merely producing a generic recurrent feature map.

The Linear SR baseline is included as a diagnostic: it is symplectic by construction, but it is not the nonlinear cotangent-lift realization used in Theorem~\ref{thm:nlsr-legendre}. The ESN baseline is included to test whether standard contractive or orthogonal reservoir structure enforces the same geometric identities.

\begin{table}[htbp]
\centering
\small
\caption{Geometric defect verification over 20 seeds. Finite-difference Jacobians use central differences with $\epsilon_{\rm fd}=10^{-4}$ and \texttt{JAC\_STRIDE}=10; the N-HSR symplectic defect is at the finite-difference precision floor. Orthogonal ESN  uses a \texttt{tanh} activation with $W=0.9\cdot Q_{\rm orth}$, where $Q_{\rm orth}$ is orthogonal. Entries displayed as $0.0$ were zero to the reported numerical precision.}
\label{tab:e1-geometric-defects}
\begin{tabular}{@{}lcccc@{}}
\toprule
Architecture & CLD & SD& NFD & Diverged \\
\midrule
N-HSR ($V\equiv 0$)
  & $0.0  $
  & $1.68{\times}10^{-12} \pm 2.77{\times}10^{-13}$
  & $0.0  $
  & $0/20$ \\
Linear SR
  & $3.55{\times}10^{-1} \pm 1.43{\times}10^{-1}$
  & $8.48{\times}10^{-16} \pm 1.07{\times}10^{-16}$
  & $0.0  $
  & $20/20$ \\
Orthogonal ESN
  & $8.50{\times}10^{-1} \pm 9.53{\times}10^{-2}$
  & $1.16{\times}10^{0} \pm 3.35{\times}10^{-2}$
  & $5.02{\times}10^{-1} \pm 1.65{\times}10^{-2}$
  & $0/20$ \\
\bottomrule
\end{tabular}
\end{table}

\subsection{OU-driven transported Legendre graph}
\label{sec:E4}

The previous experiment checks the stepwise cotangent-lift identity from step $k$ to step $k+1$ along sampled OU-driven input trajectory points:
\[
p_{k+1}=D\hat f_{u_k}(q_k)^{-\top}p_k.
\]
We now test this identity across the accumulated trajectory. 

For each random seed, we generate a stable OU input sequence $\{u_k\}$ and use it to
drive the same base dynamics
\[
q_{k+1}=\hat f_{u_k}(q_k).
\]
We follow one trajectory from an initial Legendre graph and ask whether the covector remains on the transported Legendre graph over time. In particular, experiments below use this same base trajectory and differ only in how the covector $p_k$ is updated.

We initialize the state on a quadratic Legendre graph
\[
L_{\psi_0}
=
\{(q,p):p=d\psi_0(q)\},
\qquad
\psi_0(q)=\frac{1}{2} q^\top Pq,
\]
where $P$ is chosen to be a diagonal non-identity matrix. This avoids the Euclidean self-dual case $p=q$, so the defect tests covector transport rather than equality of coordinates. The initial point is thus given by $p_0=d\psi_0(q_0)=Pq_0$. Let
\[
F_k
:=
\hat f_{u_{k-1}}\circ\cdots\circ \hat f_{u_0}
\]
be the accumulated base update map, meaning $q_k=F_k(q_0)$. Set
\[
G_k :=DF_k(q_0)
\]
to be its corresponding accumulated Jacobian. Since $V\equiv0$ by construction, the transported Legendre graph after $k$ steps is
\[
L_{\psi_k},
\qquad
\psi_k=\psi_0\circ F_k^{-1}.
\]
Therefore the correct dual coordinate at time $k$ is
\[
p_k^\star
=
d\psi_k(q_k)
=
DF_k(q_0)^{-\top}p_0
=
G_k^{-\top}p_0.
\]
Equivalently, transported Legendre consistency is characterized by
\[
G_k^\top p_k=p_0.
\]
We therefore define the pullback residual over the trajectory as follows
\[
\delta_{\mathrm{pull}}(k)
=
\frac{\|G_k^\top p_k-p_0\|}
     {1+\|p_0\|}.
\]
This residual vanishes exactly when $p_k$ is the covector obtained by
transporting $p_0$ along the accumulated base map.

We compare the N-HSR cotangent-lift update with three variations using the same base trajectory: a frozen covector update, a wrong-Jacobian covector update, and a corrupted cotangent lift. 
The N-HSR update uses the cotangent-lift rule
\[
p_{k+1}=D\hat f_{u_k}(q_k)^{-\top}p_k.
\]
The frozen-covector variation uses
\[
p_{k+1}=p_k,
\]
which tests what happens when the dual coordinate is not transported.
The wrong-Jacobian variation uses the forward Jacobian instead of the
inverse transpose,
\[
p_{k+1}=D\hat f_{u_k}(q_k)p_k,
\]
which treats a covector as if it were a tangent vector. The corrupted
cotangent-lift variation applies the correct update and then perturbs the
covector,
\[
p_{k+1}
=
D\hat f_{u_k}(q_k)^{-\top}p_k+\vartheta_k,
\qquad
\vartheta_k\sim\mathcal N(0,\sigma_{\rm corr}^2 I).
\]

Table~\ref{tab:e4-ou-lift} 
shows that the N-HSR maintains transported Legendre consistency to
numerical precision, whereas the other variants accumulate nonzero transported graph error. This illustrates the operational role of the inverse-transpose
covector update: the base trajectory alone is not enough to preserve Legendre duality.

\begin{table}[htbp]
\centering
\small
\caption{ OU-driven transported Legendre-graph consistency over 20 seeds. The primary metric is
$\delta_{\mathrm{pull}}(k)= \|G_k^\top p_k-p_0\|/(1+\|p_0\|)$, where $G_k$ is the accumulated base map Jacobian along the driven trajectory. Lower values indicate better transported graph consistency.
}
\label{tab:e4-ou-lift}
\begin{tabular}{@{}lcc@{}}
\toprule
Method & Mean $\delta_{\operatorname{pull}}$ & Max $\delta_{\operatorname{pull}}$ \\
\midrule
N-HSR ($V \equiv 0$)
  & $8.43{\times}10^{-16} \pm 5.90{\times}10^{-16}$
  & $1.73{\times}10^{-15} \pm 9.82{\times}10^{-16}$ \\
Frozen covector
  & $1.56{\times}10^{-1} \pm 1.24{\times}10^{-1}$
  & $3.48{\times}10^{-1} \pm 3.22{\times}10^{-1}$ \\
Wrong covector transport
  & $4.62{\times}10^{-1} \pm 6.96{\times}10^{-1}$
  & $1.54{\times}10^{0} \pm 2.82{\times}10^{0}$
   \\
Corrupted cotangent lift
  & $3.04{\times}10^{-1} \pm 1.06{\times}10^{-1}$
  & $7.00{\times}10^{-1} \pm 4.61{\times}10^{-1}$ \\
\bottomrule
\end{tabular}
\end{table}

This residual $\delta_{\rm pull}$ differs from the CLD in Section~\ref{sec:E1}. The CLD is a stepwise residual:
\[
\operatorname{CLD}_k
=
\frac{\|p_{k+1}-D\hat f_{u_k}(q_k)^{-\top}p_k\|}
     {1+\|p_{k+1}\|}.
\]
On the other hand, the pullback residual here in Section~\ref{sec:E4} is an accumulated trajectory residual: it checks whether $p_k$ equals the covector obtained by transporting $p_0$ through the full composition
\[
F_k=\hat f_{u_{k-1}}\circ\cdots\circ\hat f_{u_0}.
\]

In this sense, Section~\ref{sec:E1} verifies the local update rule, whereas Section~\ref{sec:E4} verifies that repeated application of that same rule transports the Legendre graph along a driven trajectory.

Together, Sections~\ref{sec:E1} and~\ref{sec:E4} show that the N-HSR does not merely produce a symplectic recurrent trajectory. It realizes the cotangent-lift normal form predicted by the results of the paper and transports the dual coordinate according to the inverse-transpose identity. 

This is the representation level property targeted by the paper: Legendre duality is preserved by the internal cotangent bundle state, without requiring a trained task readout.
}
\clearpage

\appendix
\section{Ornstein-Uhlenbeck process is Legendre dynamics}
\label{app:OU_dynamics}

In this section we use Markov semigroups to show that Ornstein-Uhlenbeck dynamics is an example of Legendre dynamics. For the rest of the section we restrict to exponential families.

\begin{definition}
    The \textbf{Markov semigroup $\left(P_t\right)_{t\geq 0}$ associated with a Markov process $\left(X_t\right)_{t\geq 0}$} is given by the following condition: For every bounded measurable function $f:\Omega\rightarrow\mathbb{R}$, for all $x\in \Omega$ and $t\geq 0$, $P_t$ satisfies
    \[
    P_tf(x) = \mathbb{E}\left[f(X_t)\, \middle|\, X_0 = x \right], \quad x\in \Omega.
    \]
\end{definition}

Let $\mathfrak{L}$ denote the infinitesimal generator of Markov semigroup $\left(P_t\right)_{t\geq 0}$ defined by:
\[
\mathfrak{L}f:=\lim_{t\rightarrow 0^+} \frac{P_t f - f}{t}.
\]
Then the group of adjoint operators acting on measures, denoted by $\left(P_t^*\right)$, has corresponding adjoint generator $\mathfrak{L}^*$.

\begin{lemma}
    \label{lemma:markov_lemma}
    Let $\left(P_t\right)_{t\geq 0}$ be a Markov semigroup with generator $\mathfrak{L}$. Fix $t>0$, for any $f$ in the domain of $\mathfrak{L}$ the map $u(s) :=P_{t-s}f\in C^1(\Omega)$ satisfies
    \[
    \frac{\partial}{\partial s}u(s) = -\mathfrak{L}u(s).
    \]
\end{lemma}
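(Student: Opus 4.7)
The plan is to reduce the statement to the defining generator relation $\lim_{h\to 0^+} h^{-1}(P_h g - g) = \mathfrak{L}g$ via the semigroup property $P_{a+b} = P_a P_b$, the key manipulation being to rewrite the increment $u(s+h)-u(s)$ so that a genuine forward-time semigroup increment $P_h - I$ appears, acting on an element that still lies in the domain of $\mathfrak{L}$.

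First, I would fix $s \in [0,t]$ and take $h>0$ small enough that $s+h \le t$. Using the semigroup law,
\[
u(s) \;=\; P_{t-s}f \;=\; P_h\, P_{t-s-h} f \;=\; P_h\, u(s+h),
\]
so the right-hand difference quotient satisfies
\[
\frac{u(s+h) - u(s)}{h} \;=\; -\,\frac{P_h\, u(s+h) - u(s+h)}{h}.
\]
The right forward derivative is therefore $-\mathfrak{L}\,u(s)$ provided two points are handled: (i) $u(s+h)$ lies in the domain of $\mathfrak{L}$ for every admissible $s,h$, and (ii) we can pass to the limit $h \to 0^+$ without losing the generator.

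For (i), I would invoke the standard fact that strongly continuous semigroups preserve the domain, i.e.\ $P_r(\mathrm{Dom}(\mathfrak{L})) \subseteq \mathrm{Dom}(\mathfrak{L})$ and $\mathfrak{L} P_r g = P_r \mathfrak{L} g$ for $g\in \mathrm{Dom}(\mathfrak{L})$. Since $f \in \mathrm{Dom}(\mathfrak{L})$ by hypothesis, $u(r) = P_{t-r}f$ belongs to $\mathrm{Dom}(\mathfrak{L})$ for all $r\in [0,t]$. For (ii), write
\[
\frac{P_h\, u(s+h) - u(s+h)}{h} \;=\; \frac{P_h - I}{h}\, u(s+h),
\]
and split this as $\frac{P_h - I}{h}\, u(s) \,+\, \frac{P_h - I}{h}\bigl(u(s+h) - u(s)\bigr)$. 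The first summand converges to $\mathfrak{L}\,u(s)$ by the definition of the generator on $\mathrm{Dom}(\mathfrak{L})$. The second summand I would show vanishes by using strong continuity of $P_h$ (so it is locally uniformly bounded in operator norm for $h$ small) together with continuity of $s \mapsto u(s)$ in the semigroup norm; equivalently, one can rewrite $u(s+h) - u(s) = -\int_s^{s+h} \mathfrak{L} u(r)\,dr$ in the strong sense, giving an $O(h)$ estimate that absorbs the $1/h$.

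A symmetric argument handles the left derivative: for $h>0$ with $s-h\ge 0$, use $u(s-h) = P_h u(s)$ to get $\frac{u(s) - u(s-h)}{h} = -\frac{P_h - I}{h} u(s) \to -\mathfrak{L} u(s)$, this time directly by the generator definition. Combining the two one-sided limits gives $\frac{d}{ds}u(s) = -\mathfrak{L}u(s)$, and the $C^1$ regularity in $s$ follows from continuity of $r \mapsto \mathfrak{L} u(r) = P_{t-r}\mathfrak{L}f$ by strong continuity of the semigroup.

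The main obstacle I anticipate is the technical step (ii) above, namely showing the error term $\frac{P_h - I}{h}(u(s+h) - u(s))$ vanishes as $h\to 0^+$; this is where the standard abstract framework (strong continuity, closedness of $\mathfrak{L}$, invariance of $\mathrm{Dom}(\mathfrak{L})$ under $P_t$) must be used, and I would cite these semigroup facts rather than reprove them.
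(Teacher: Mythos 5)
Your overall strategy coincides with the paper's: both proofs reduce the claim to the generator limit via the semigroup identity $P_{t-s} = P_{t-s-h}P_h$, together with domain invariance and $P_r\mathfrak{L}f = \mathfrak{L}P_rf$. The difference is where the increment $(P_h - I)/h$ ends up. The paper writes $\frac{u(s+h)-u(s)}{h} = -P_{t-s-h}\,\frac{(P_h - I)f}{h}$, so that $(P_h-I)/h$ acts on the \emph{fixed} vector $f\in\mathrm{Dom}(\mathfrak{L})$, and the limit follows from strong convergence of $(P_hf-f)/h$ together with uniform boundedness and strong continuity of $P_{t-s-h}$. You instead attach $(P_h - I)/h$ to the \emph{moving} vector $u(s+h)$ and split off the error term $\frac{P_h-I}{h}\bigl(u(s+h)-u(s)\bigr)$.

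That error term is where your argument has a genuine gap as written. The bound you propose (operator-norm boundedness of $P_h$ plus $\|u(s+h)-u(s)\| = O(h)$) yields only
\[
\frac{1}{h}\,\bigl\|(P_h - I)\bigl(u(s+h)-u(s)\bigr)\bigr\| \;\le\; \frac{\|P_h - I\|}{h}\cdot O(h) \;=\; O(1),
\]
because for a strongly (but not uniformly) continuous semigroup $\|P_h - I\|$ does \emph{not} tend to $0$ in operator norm; so the $O(h)$ estimate absorbs the $1/h$ but leaves a bounded, not vanishing, quantity. The term does vanish, but one needs a sharper argument: for instance, write $u(s+h)-u(s) = -\int_0^h P_{t-s-h+r}\mathfrak{L}f\,dr$ and use that $(P_h-I)g\to 0$ uniformly for $g$ in the compact set $\{P_\sigma\mathfrak{L}f : \sigma\in[t-s-\epsilon,\,t-s]\}$ --- or simply commute $P_{t-s-h}$ out front as the paper does, which makes the error term disappear entirely. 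Your left-derivative computation is clean (there $(P_h-I)/h$ already hits the fixed vector $u(s)$), and your treatment of domain invariance and of the $C^1$ regularity of $s\mapsto u(s)$ is in fact more explicit than the paper's; the proof closes once the right-derivative error term is handled correctly.
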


\begin{proof}
    By definition of Markov semigroup,
    \begin{align*}
        \frac{u(s+h) - u(s)}{h} &= \frac{P_{t-s-h}f - P_{t-s}f}{h} = \frac{P_{t-s-h}f - P_{t-s-h}P_hf}{h} \\
        &= P_{t-s-h}\frac{\left(I - P_h\right)f}{h} = - P_{t-s-h}\frac{\left(P_h - I\right)f}{h}.
    \end{align*}
    Then as $h\rightarrow 0$, by definition of generator $\mathfrak{L}$, we have the desired equality as follows:
    \begin{align*}
        \frac{d}{ds}u(s) &= \lim_{h\rightarrow 0} \frac{u(s+h) - u(s)}{h}= -P_{t-s} \lim_{h\rightarrow 0} \frac{P_h f- f}{h}  \\
        &= -P_{t-s} \mathfrak{L}f = -\mathfrak{L} (P_{t-s}f) = -\mathfrak{L}u(s).
    \end{align*}
    
\end{proof}

Let $\operatorname{Dom}(\mathfrak{L})$ denote the operator domain of $\mathfrak{L}$ and let $C_0(\Omega)$ denote the space of continuous functions vanishing at infinity (see for example  \cite{kurt2022markov}).

\begin{theorem}
\label{thm:markov_tangent}
Let $E = \{p_\theta(u) = \exp(\theta \cdot T(u) - \psi(\theta))\}$ be an exponential family on a state space $\Omega$. Assume that $a:\Xi\to\mathbb R^n$ is locally Lipschitz and forward complete on $\Xi$, then a Markov semigroup $\left(P_t\right)_{t\geq 0}$ on $C_0(\Omega)$ preserves exponential family if and only if the corresponding generator $\mathfrak{L}$ satisfies, for all $\theta \in \Xi$ and $p_\theta \in \operatorname{Dom}(\mathfrak{L^*})$ the following
    \begin{align}
    \label{eqn:L_affine_T}
        \frac{(\mathfrak{L}^*p_\theta)(u)}{p_\theta(u)}=a(\theta) \cdot  T(u)+b(\theta) \quad\text{with}\quad b(\theta)=-\bar{\eta}(\theta) \cdot  a(\theta). 
    \end{align}
    Equivalently, $\mathfrak{L}^*p_\theta$ is an affine function in $T$ under $p_\theta$.
\end{theorem}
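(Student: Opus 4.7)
The plan is to translate the invariance of the exponential family $E$ under $P_t^*$ into an ODE on the natural parameter $\theta$ via the forward Kolmogorov equation, and then match its tangent vector with the logarithmic derivative of $p_\theta$ in natural coordinates. The identity I will lean on throughout is that for any $C^1$ curve $\theta(t)\in\Xi$,
\[
\partial_t p_{\theta(t)}(u) \;=\; p_{\theta(t)}(u)\bigl[\dot\theta(t)\cdot T(u) - \nabla\psi(\theta(t))\cdot \dot\theta(t)\bigr],
\]
which already has the required affine-in-$T$ form, with $\dot\theta(t)$ playing the role of $a(\theta(t))$ and $-\eta(\theta(t))\cdot\dot\theta(t) = -\nabla\psi(\theta(t))\cdot\dot\theta(t)$ playing the role of $b(\theta(t))$. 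The two implications then amount to producing, respectively consuming, such a curve $\theta(t)$.

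For the $(\Leftarrow)$ direction, I would consider the autonomous ODE $\dot\theta = a(\theta)$ on $\Xi$ with $\theta(0) = \theta_0$. Local Lipschitz continuity of $a$ yields, by Picard-Lindelöf, a unique local solution $\theta(t)\in\Xi$. Setting $q_t := p_{\theta(t)}$, the identity above combined with the hypothesized affine form gives $\partial_t q_t = \mathfrak{L}^* q_t$, so $q_t$ and $P_t^* p_{\theta_0}$ both solve the Kolmogorov forward equation with the same initial datum $p_{\theta_0}$. Uniqueness of solutions for a strongly continuous Markov semigroup on $C_0(\Omega)$ then forces $P_t^* p_{\theta_0} = p_{\theta(t)} \in E$, as required.

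For the $(\Rightarrow)$ direction, preservation of $E$ provides, for each $\theta_0$ with $p_{\theta_0}\in\operatorname{Dom}(\mathfrak{L}^*)$, a curve $\theta(\cdot)\subset\Xi$ with $\theta(0)=\theta_0$ and $P_t^*p_{\theta_0} = p_{\theta(t)}$; differentiating at $t=0$ and setting $a(\theta_0):=\dot\theta(0)$ would give the claimed affine form with $b(\theta_0) = -\eta(\theta_0)\cdot a(\theta_0)$. The hard part will be justifying that $\theta(t)$ is actually differentiable at $t=0$, because $p_{\theta_0}\in\operatorname{Dom}(\mathfrak{L}^*)$ only yields strong differentiability of $t\mapsto P_t^* p_{\theta_0}$ as a curve of measures, not directly as a curve in $\Xi$. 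My plan is to use the Legendre diffeomorphism $\eta=\nabla\psi:\Xi\to\nabla\psi(\Xi)$ of a regular exponential family to write $\theta(t) = (\nabla\psi)^{-1}\bigl(\mathbb{E}_{P_t^*p_{\theta_0}}[T]\bigr)$, so that differentiability of $\theta(\cdot)$ at $t=0$ reduces to differentiability of $t\mapsto \int T\, d(P_t^* p_{\theta_0})$, which follows from $p_{\theta_0}\in\operatorname{Dom}(\mathfrak{L}^*)$ together with integrability of $T$ against $p_{\theta_0}$ (automatic inside the natural parameter domain). Once this is in hand, solving for $a(\theta_0)$ and checking that $\theta_0\mapsto a(\theta_0)$ inherits local Lipschitz regularity finishes the proof.
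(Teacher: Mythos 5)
Your proposal follows essentially the same route as the paper: in the $(\Leftarrow)$ direction, integrate $\dot\theta=a(\theta)$ (Picard--Lindel\"of via the local Lipschitz hypothesis), observe that $q_t:=p_{\theta(t)}$ satisfies $\partial_t q_t=\mathfrak{L}^*q_t$ because the logarithmic derivative of an exponential-family curve is exactly $\dot\theta\cdot T-\nabla\psi(\theta)\cdot\dot\theta$, and then identify $q_t$ with $P_t^*p_{\theta_0}$; in the $(\Rightarrow)$ direction, differentiate $P_t^*p_\theta=p_{\theta(t)}$ at $t=0$ and read off $a=\dot\theta(0)$, $b=-\eta\cdot a$.

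The one place where your argument is thinner than the paper's is the identification step in $(\Leftarrow)$. You invoke ``uniqueness of solutions for a strongly continuous Markov semigroup on $C_0(\Omega)$,'' but the forward equation lives on the adjoint side: $P_t^*$ acts on measures (the dual of $C_0(\Omega)$), where the semigroup is in general only weak-$*$ continuous, so the standard abstract Cauchy-problem uniqueness theorem for strongly continuous semigroups does not apply directly. This is precisely the point at which the paper does real work: it fixes $t>0$, sets $\nu_s=P_{t-s}f$ for $f\in\operatorname{Dom}(\mathfrak{L})$ (its Lemma~\ref{lemma:markov_lemma} gives $\partial_s\nu_s=-\mathfrak{L}\nu_s$), shows that $\Phi(s)=\langle q_s\mu,\nu_s\rangle$ is constant by pairing the forward equation for $q_s$ against the backward equation for $\nu_s$, and then extends from $\operatorname{Dom}(\mathfrak{L})$ to $C_0(\Omega)$ by density and concludes $q_t\mu=P_t^*(q_0\mu)$ via Riesz--Markov--Kakutani. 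Your step is fillable by exactly this duality computation, but as stated the cited justification would not go through. On the other side of the ledger, your $(\Rightarrow)$ direction is more careful than the paper's: the paper silently assumes the curve $\theta(t)$ with $P_t^*p_\theta=p_{\theta(t)}$ is differentiable at $t=0$, whereas you propose to derive this from differentiability of $t\mapsto\int T\,d(P_t^*p_{\theta_0})$ composed with the inverse Legendre map $(\nabla\psi)^{-1}$, which is a genuine improvement in rigor (for a minimal regular family, where $\nabla\psi$ is a diffeomorphism onto its image).
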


\begin{proof}
      \textbf{($\Rightarrow$)}
      Suppose exponential family $E$ is preserved by Markov semigroup $\left(P_t\right)_{t\geq 0}$. Let $\theta = \theta(0) \in \Xi$ be arbitrary starting point of a curve $\theta(t) \in \Xi$. Then by assumption the following path is also in $E$, given by
      \[
      p(t) = p_{\theta(t)} = P_t^* p_\theta \in E.
      \]
      Here $(P_t^*)_{t\ge 0}$ denotes the adjoint semigroup acting on densities
with respect to the reference measure $\mu$, defined by
\[
  \langle P_t^* p, f\rangle = \langle p, P_t f\rangle,
  \qquad \forall\, f\in C_0(\Omega),
\]
where $\langle\cdot,\cdot\rangle$ denotes integration against $\mu$. Differentiating at $t = 0$ with $\mathfrak{L}^*$, we get
      \[
      \mathfrak{L}^* p_{\theta} = \frac{d}{dt} p_{\theta(t)}\Big\vert_{t = 0}
      \]

      Since $ p_{\theta(t)} \in E$ for all $t$, by definition of exponential family,
      \begin{align*}
          \frac{d}{dt} p_{\theta(t)}(u) &= \frac{d}{dt} \exp\left[\theta(t)\cdot T(u)- \psi\left(\theta(t)\right)\right] \\
          \text{(Chain rule)}&= p_{\theta(t)}\frac{d}{dt} \left[\theta(t)\cdot T(u)- \psi\left(\theta(t)\right)\right] \\
          &= p_{\theta(t)} \left[\frac{d}{dt}\theta(t)\cdot T(u)- \frac{d}{dt}\psi\left(\theta(t)\right)\right] \\
          &= p_{\theta(t)} \left[\frac{d}{dt}\theta(t)\cdot T(u)- \frac{d}{dt}\theta(t) \cdot \underbrace{\nabla_\theta\psi(\theta(t))}_{\bar{\eta}(t)}\right] \\
          &= p_{\theta(t)} \left[\underbrace{\frac{d}{dt}\theta(t)}_{=:a(\theta(t)) \in \mathbb{R}^n}\cdot T(u)- \underbrace{\frac{d}{dt}\theta(t) \cdot \bar{\eta}(t)}_{=: -b(\theta(t))\in \mathbb{R}}\right] .
      \end{align*}

      Hence,
      \begin{align*}
          \frac{d}{dt} p_{\theta(t)}(u) = p_{\theta(t)} (u) \left[a(\theta(t))\cdot T(u) + b(\theta(t))\right].
      \end{align*}
      By construction $\theta(0) = \theta$, and we get the desired generator identity Equation~\eqref{eqn:L_affine_T} as follows
      \begin{align*}
          \mathfrak{L}^*p_{\theta}(u) = \frac{d}{dt} p_{\theta(t)}(u)\Big\vert_{t = 0}= p_{\theta} (u) \left[a(\theta)\cdot T(u) + b(\theta)\right].
      \end{align*}

    \textbf{($\Leftarrow$)} Conversely suppose Equation~\eqref{eqn:L_affine_T} is satisfied. Considering the following differential equation on $\Xi$
    \[
    \dot{\theta}(t) = a(\theta(t)), \qquad \theta(0) = \theta.
    \]
    By assumption $a:\Xi\to\mathbb R^n$ is locally Lipschitz and forward complete, hence the above ODE admits a unique solution $\theta(t)$ for all $t\geq 0$ and remain in $\Xi$.
    Let $q_t(u) := p_{\theta(t)}(u)\in E$, then by construction $\log q_t(u) = \theta(t)\cdot T(u) - \psi(\theta(t))$. By the derivation of the first part, we obtain
    \begin{align*}
        \frac{d}{dt}q_t(u) = q_t(u) \frac{d}{dt} \log q_t(u) &= q_t(u)\cdot\left[\frac{d}{dt}\theta(t)\cdot T(u)- \frac{d}{dt}\theta(t) \cdot \nabla_\theta\psi(\theta(t))\right] \\
        &= q_t(u) \cdot \left[a(\theta(t))\cdot T(u) + b(\theta(t))\right].
    \end{align*}
    By assumption since Equation~\eqref{eqn:L_affine_T} is satisfied at $\theta(t)$, we obtain
    \begin{align*}
        \frac{\mathfrak{L}^*q_t(u)}{q_t(u)} = a(\theta(t)) \cdot  T(u)+b(\theta(t)).
    \end{align*}
    
    Combining the two equations above, we get
    \begin{align}
    \label{eqn:dtqt_L*qt}
        \partial_tq_t = \frac{d}{dt}q_t = \mathfrak{L}^*q_t.
    \end{align}

    Let $\mu$ be reference measure on $\Omega$ and set $\mu_t = q_t\mu$. 

    Fix $t>0$, for $f\in \operatorname{Dom}(\mathfrak{L})$ define the function on $u\in \Omega$:
    \[
    \nu_s(u) = \left(P_{t-s}f\right)(u).
    \]
    By construction it satisfies the hypothesis of Lemma~\ref{lemma:markov_lemma}, hence:
    \begin{align}
    \label{eqn:markov_lemma_in_theorem}
        \frac{d}{ds}\nu_s = -\mathfrak{L}\nu_s.
    \end{align}
    Now consider the following
    \begin{align*}
        \Phi(s) := \langle \mu_s, \nu_s\rangle = \int_{\Omega} \nu_s(u)q_s(u) \mu(du) =  \int_{\Omega} \nu_s(u)q_s(u) d\mu .
    \end{align*}
    Using Equations~\eqref{eqn:dtqt_L*qt} and~\eqref{eqn:markov_lemma_in_theorem}, we get
    \begin{align*}
        \Phi'(s) &= \int_\Omega \partial_s\nu_sq_sd\mu + \int_\Omega\nu_s\partial_sq_sd\mu\\
        &= \int_\Omega (-\mathfrak{L}\nu_s)q_sd\mu + \int_\Omega\nu_s \mathfrak{L}^*q_sd\mu \\
        &= -\langle \mu_s,\mathfrak{L} \nu_s\rangle + \langle \mathfrak{L}^*\mu_s, \nu_s\rangle \\
        &= -\langle \mu_s,\mathfrak{L} \nu_s\rangle + \langle \mu_s, \mathfrak{L}\nu_s\rangle = 0.
    \end{align*}
    Hence $\Phi(s)$ is a constant function  on $0\leq s\leq t$ and $\Phi(t) = \Phi(0)$ for all fixed $t\in \mathbb{R}_{\geq 0}$. In particular, since $\nu_s(u) = \left(P_{t-s}f\right) (u)$, we have
    \[
    \begin{cases}
        \nu_0(u) = \left(P_{t}f\right) (u)\\
        \nu_t(u) = f (u).
    \end{cases}
    \]
    Hence we have the following
    \begin{align*}
        \int_\Omega f(u) q_t(u) d\mu &= \Phi(t) \\
        &=\Phi(0) = \int_\Omega\nu_0(u)q_0(u)d\mu = \int_\Omega \left(P_{t}f\right) (u)q_0(u)d\mu.
    \end{align*}
    Set $\mu_0 = q_0\mu$ and $\mu_t = q_t\mu$, the above equation becomes
    \begin{align}
    \label{eqn:equal_inner}
    \langle\mu_t,f\rangle = \langle \mu_0,P_tf\rangle, \quad \forall f\in \operatorname{Dom}(\mathfrak{L}).
    \end{align}
     Consider a sequence of functions $\left\{f_n\right\}_{n\geq 0} \subset \operatorname{Dom}(\mathfrak{L})$ such that $f_n\rightarrow f$. Since $P_t$ is bounded in $C_0$, by dominating convergence we have
    \begin{align*}
        \begin{cases}
            \langle\mu_t,f_n\rangle\rightarrow\langle\mu_t,f\rangle\\
            \langle \mu_0,P_tf_n\rangle \rightarrow \langle \mu_0,P_tf\rangle
        \end{cases}
    \end{align*}
    By  \cite{kurt2022markov}[Lemma 3.1], $\operatorname{Dom}(\mathfrak{L})$ is dense in $C_0(\Omega)$, we can extend Equation~\eqref{eqn:equal_inner} to all $f\in C_0(\Omega)$, in particular,
    \begin{align*}
        \langle\mu_t,f\rangle = \langle \mu_0,P_tf\rangle , \quad \forall f\in C_0(\Omega).
    \end{align*}
    
    Since $\langle\mu_0,P_t f\rangle = \langle P_t^*\mu_0,f\rangle$ by the
definition of the adjoint semigroup $(P_t^*)_{t\ge 0}$, Equation~\eqref{eqn:equal_inner} can be
rewritten as
\[
  \langle\mu_t,f\rangle = \langle P_t^*\mu_0,f\rangle,
  \qquad \forall\, f\in C_0(\Omega).
\]
By the Riesz--Markov--Kakutani representation theorem, the map
$\mu\mapsto(f\mapsto\langle\mu,f\rangle)$ is injective on finite, signed Borel
measures on $\Omega$, hence 
\[
\mu_t = P_t^* \mu_0.
\]

    Finally we unravel the constructions. By construction of $\mu_0,\mu_t$, the above equality becomes
    \[
    \mu_t = q_t\mu = P_t^*q_0\mu.
    \]
    Then by construction of $q_0 = p_\theta, q_t = p_{\theta(t)}$, we have
    \[
    P_t^*\left(p_\theta\mu\right) = p_{\theta(t)}\mu \in E.
    \]
    Hence $P_t$ preserves $E$ as desired. 
\end{proof}

\begin{remark}[Relation to information geometry]
    In the context of Information Geometry, Theorem~\ref{thm:markov_tangent} is a semigroup formulation of the principle “a flow stays on a statistical manifold iff its velocity is tangent to it.” For the exponential family
    \begin{align*}
        E &:= \left\{p_\theta(x) = \exp\left(\theta\cdot T(x) - \psi(\theta)\right) \right\} \\
        T_{p_\theta}E &:= \left\{p_\theta\left(a \cdot \left( T -\bar{\eta}(\theta) \right)\right) \,\middle|\, a\in \mathbb{R}^k\right\}.
    \end{align*}

    The \textbf{only-if, $\left(\Rightarrow\right)$} direction essentially means if the forward flow stays within the manifold (flow preserves an embedded submanifold), then the generator $\mathfrak{L}$ (the differential of $p_\theta$) is tangent. This is a standard result in differential geometry applied to probabilistic setting where the state space is $C_0(\Omega)^*$ via  the Riesz-Markov-Kakutani theorem's identification of finite measures in  $C_0(\Omega)^*$ with $C_0(\Omega)$.

    Conversely, in  \cite{pistone2009projecting} the authors set $\alpha = \frac{\mathfrak{L}^*p}{p}$ as the Fokker-Planck tangent in exponential coordinates and show that if $\alpha \in \operatorname{span}\left\{1,T\right\}$, then the solution to Fokker-Planck equation starting in a finite-dimensional exponential family evolves inside it. 

    Hence the \textbf{if, $\left(\Leftarrow\right)$} direction of Theorem~\ref{thm:markov_tangent} can be understood as a semigroup formulation of  \cite{pistone2009projecting}[Proposition 8(iii)]. We include the present proof since it is novel in the sense that:
    \begin{itemize}
        \item our proof relies only on semigroup calculus and backward-forward duality (without the machinery of Information Geometry).
        \item In the literature the result is scattered across multiple references. The present proof is shorter and self-contained. 
    \end{itemize}
\end{remark}

We are now ready to formally define Legendre dynamics. For the rest of the paper we focus on exponential family.

\begin{definition}[Legendre dynamics in semigroup form]
\label{defn:legendre_semigroup}
Let $E = \{p_\theta(u) = \exp(\theta \cdot T(u) - \psi(\theta))\}$ be an exponential family on a state space $\Omega \ni u$ with parameter manifold $\Xi \ni \theta$ and dual map $\bar{\eta} = \nabla \psi(\theta)$. A Markov semigroup $(P_t)_{t\geq 0}$ is \textbf{Legendre type on $E$} if for every $\theta \in \Xi$ and $t\geq 0$ there exists $\theta(t) \in \Xi$ with $\theta(0) = \theta$ such that
\[
P_t^*p_\theta= p_{\theta(t)} \in E.
\]
\end{definition}

The geometric definition of Legendre dynamics and the semigroup definition are equivalent along the path generated by $P_t^*$. The former is geometric and the latter is on stochastic processes. A process is of Legendre type if and only if its adjoint semigroup preserves the exponential family if and only if the lifted evolution on $T^*\Xi$ keeps the Lagrangian submanifold $\mathfrak{L}_\psi$ invariant. Furthermore, by Theorem~\ref{thm:markov_tangent} this holds when $\frac{\mathfrak{L}^*p_\theta}{p_\theta}$ is an affine function on sufficient statistics under $p_\theta$. We write the equivalence of statistical,  geometrical, and infinitesimal formulations of Legendre dynamics more formally as follows.

 {

\begin{theorem}
\label{thm:equivalent}
     Let $Q = \Xi$, $\psi:\Xi\rightarrow \mathbb{R}$ be $C^2$. Let $L_\psi$ denote the Lagrangian submanifold of Legendre type corresponding to potential $\psi$, given by\footnote{See Definition~\ref{defn:legendre_duality}, Theorem~\ref{thm:lagrangian_legendre_type}.}
     \[
    L_\psi := \left\{(\theta,\bar{\eta}) \in T^*Q\middle| \bar{\eta} = d\psi(\theta)\right\}.
    \]
   Let $(P_t)_{t\geq 0}$ be a Markov semigroup with adjoint generator $\mathfrak{L}^*$ defined on minimal regular exponential family 
   \[
   E=\{p_\theta(u)=\exp(\theta\cdot T(u)-\psi(\theta)):\theta\in\Xi\}.
   \]
   For each $\theta_0\in \Xi$, the following are equivalent. Hence whenever any (hence all) of them hold, they determine  a unique $C^1$ curve $\theta(t)\in \Xi$ with $\theta(0) = \theta_0$. 
    \begin{itemize}
        \item (Semigroup Legendre Dynamics) For all $t\geq 0$, there exists a curve $\theta(t)\in \Xi$ with $\theta(0) = \theta_0 \in \Xi$ such that
        \[
        P^*_tp_{\theta_0} = p_{\theta(t)} \in E.
        \]
        \item (Lagrangian submanifold of Legendre type) For $\theta \in \Xi$ and the curve $\theta(t)$ above, the semigroup-induced density path $P_t^*p_{\theta_0}$ is represented by a curve satisfying for all $t\geq 0$
        \[
        t\mapsto \left(\theta(t),\bar{\eta}(t)\right) \in L_\psi, \qquad \bar{\eta}(t) = \nabla\psi(\theta(t)),
        \]
        with $\theta(0) = \theta_0$ and $P_t^*p_{\theta_0}=p_{\theta(t)}$. 
        \item (Infinitesimal characterization Theorem~\ref{thm:markov_tangent}) There exists a locally Lipschitz, forward-complete vector field $a(\theta)$ such that for all $\theta\in \Xi$:
         \begin{align*}
        \frac{(\mathfrak{L}^*p_\theta)(u)}{p_\theta(u)}=a(\theta) \cdot  T(u)+b(\theta) \quad\text{with}\quad b(\theta)=-\bar{\eta}(\theta) \cdot  a(\theta),
    \end{align*}
    and $\theta(t)$ solves $\dot{\theta}(t) = a(\theta(t))$ with $\theta(0) = \theta_0$ with solution defined on $\Xi$ for all $t \geq 0$.
    \end{itemize}
\end{theorem}

\begin{proof}
The proof is straightforward by definition:
\begin{itemize}
    \item $(i)\implies(iii)$:  follows from the \textbf{only-if} $\left(\Rightarrow\right)$ direction of the proof of Theorem~\ref{thm:markov_tangent}.
    \item $(iii)\implies(i)$: follows from the \textbf{if} $\left(\Leftarrow\right)$ direction of the proof of Theorem~\ref{thm:markov_tangent}.
    \item $(i)\implies(ii)$: if $P_t^*p_{\theta_0}=p_{\theta(t)}$, then the corresponding dual coordinate $\bar\eta(t)=d\psi(\theta(t))$ gives a lifted curve in $L_\psi$.
    \item $(ii)\implies(i)$: conversely by $(ii)$ the lifted curve represents the semigroup-induced density path, and hence implies (i).
\end{itemize}

        

\end{proof}

}

Now we can show that OU dynamics satisfies the hypothesis of Theorem~\ref{thm:markov_tangent}, thus by Theorem~\ref{thm:equivalent}, it is a Legendre dynamics.


Recall for the Gaussian family we write the natural parameter as $\theta=(\xi,\Lambda)$, and reserve $\bar\eta(\theta)=\nabla\psi(\theta)$ for the expectation coordinate.

\begin{theorem}
Let $\left(P_t\right)_{t\geq 0}$ be the Markov semigroup of the Ornstein-Uhlenbeck SDE on $\mathbb{R}^d$:
\[
dU_t = K(\mu-U_t)dt + \sigma dW_t,
\]
where $K,\sigma,\mu$ are constants, $W_t$ denote a Wiener process, and $A:=\sigma\sigma^\top \geq 0$. Let $\mathfrak{L}$ denote the generator of $\left(P_t\right)_{t\geq 0}$ with adjoint $\mathfrak{L}^*$. Let $E = \left\{p_\theta \middle| \theta\in\Xi \right\}$ be minimal, regular Gaussian exponential family on $\mathbb{R}^d$. Then for all $\theta\in \Xi$, there exists curve $\theta(t) \in \Xi$ with $\theta(0) = \theta$ such that
         \begin{align*}
        \frac{(\mathfrak{L}^*p_\theta)(u)}{p_\theta(u)}=a(\theta) \cdot  T(u)+b(\theta) \quad\text{with}\quad b(\theta)=-\bar \eta(\theta) \cdot  a(\theta),
    \end{align*}
    and $\theta(t)$ solves $\dot{\theta}(t) = a(\theta(t))$ with $\theta(0) = \theta$. The induced vector field $a(\theta)$ is locally Lipschitz and forward complete. Hence condition $(iii)$ of Theorem~\ref{thm:equivalent} holds.
\end{theorem}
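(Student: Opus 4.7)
The plan is to verify condition (iii) of Theorem~\ref{thm:equivalent} by a direct computation of the Fokker--Planck adjoint acting on Gaussian densities. Parametrize $E$ by $\theta = (\eta, \Lambda) \in \mathbb{R}^d \times \operatorname{Sym}_d^{++}$ with sufficient statistics $T(u) = (u, -\tfrac{1}{2}uu^\top)$ and dual parameter $\bar\eta(\theta) = \nabla\psi(\theta)$ as in Equation~\eqref{eqn:eta_bar}. The OU generator and its formal adjoint are
\[
\mathfrak{L} f(u) = K(\mu - u)\cdot \nabla f(u) + \tfrac{1}{2}\operatorname{tr}\bigl(A\,\nabla^2 f(u)\bigr),
\]
\[
\mathfrak{L}^* p(u) = -\nabla\cdot \bigl[K(\mu - u) p(u)\bigr] + \tfrac{1}{2}\operatorname{tr}\bigl(A\,\nabla^2 p(u)\bigr),
\]
so the task reduces to showing that $(\mathfrak{L}^* p_\theta)/p_\theta$ is an affine function of $T(u)$ with a coefficient vector $a(\theta)$ locally Lipschitz in $\theta$, and to verifying the compatibility $b(\theta) = -\bar\eta(\theta)\cdot a(\theta)$.

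For the computation I would exploit the logarithmic-derivative identities $\nabla_u \log p_\theta = \eta - \Lambda u$ and $\nabla_u^2 \log p_\theta = -\Lambda$, which yield $\nabla_u p_\theta = p_\theta(\eta - \Lambda u)$ and $\nabla_u^2 p_\theta = p_\theta\bigl[(\eta - \Lambda u)(\eta - \Lambda u)^\top - \Lambda\bigr]$. Substituting and dividing by $p_\theta$ gives
\[
\frac{\mathfrak{L}^* p_\theta}{p_\theta}(u) = \operatorname{tr}(K) - K(\mu - u)\cdot (\eta - \Lambda u) + \tfrac{1}{2}(\eta - \Lambda u)^\top A (\eta - \Lambda u) - \tfrac{1}{2}\operatorname{tr}(A\Lambda).
\]
The key observation is that each $p_\theta^{-1}\partial^\alpha p_\theta$ with $|\alpha|\le 2$ is a polynomial in $u$ of degree at most $|\alpha|$, so the right-hand side is at most quadratic in $u$ and therefore affine in $T(u)$. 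Collecting the coefficients of $u$ and of $uu^\top$ (after symmetrization of $u^\top K^\top \Lambda u$) gives
\[
a_\eta(\theta) = \Lambda K\mu + K^\top \eta - \Lambda A\eta, \qquad a_\Lambda(\theta) = K^\top \Lambda + \Lambda K - \Lambda A \Lambda,
\]
both polynomial in $(\eta,\Lambda)$ and therefore locally Lipschitz on $\Xi$.

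For the compatibility relation $b(\theta) = -\bar\eta(\theta)\cdot a(\theta)$ I would bypass a brute-force algebraic check and use mass conservation: since $(P_t^*)_{t\ge 0}$ is a Markov semigroup on densities, $\int \mathfrak{L}^* p_\theta\,du = 0$, so integrating the affine identity against $p_\theta$ gives
\[
0 = a(\theta)\cdot \mathbb{E}_\theta[T(U)] + b(\theta) = a(\theta)\cdot \bar\eta(\theta) + b(\theta),
\]
which is precisely the desired relation. The induced ODE $\dot\theta(t) = a(\theta(t))$ with $\theta(0) = \theta$ then admits a local $C^1$ solution by Picard--Lindel\"of; expressed in moment coordinates it is the classical linear/Lyapunov system $\dot m = K(\mu - m)$, $\dot\Sigma = -K\Sigma - \Sigma K^\top + A$, whose forward invariance of $\operatorname{Sym}_d^{++}$ ensures that $\theta(t)\in\Xi$ for all $t\ge 0$.

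The main obstacle is keeping the index algebra of the adjoint computation clean enough to see that $(\mathfrak{L}^* p_\theta)/p_\theta$ is exactly quadratic in $u$ with the coefficients above; the log-derivative viewpoint is what makes this manifest and bounds the degree a priori, so no higher-order terms in $u$ can arise from $\mathfrak{L}^*$ acting on a Gaussian. Once this polynomial structure is recognized, the explicit identification of $a_\eta$ and $a_\Lambda$ is routine bookkeeping, and the compatibility for $b$ follows automatically from mass conservation, so no further information-geometric input is required to conclude that OU is Legendre dynamics on $(E,\psi)$.
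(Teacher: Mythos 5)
Your proposal is correct and follows essentially the same route as the paper: compute the Fokker--Planck adjoint via the logarithmic derivatives $\nabla\log p_\theta = \eta - \Lambda u$ and $\nabla^2\log p_\theta = -\Lambda$, collect the constant, linear, and quadratic terms in $u$ to read off $a_\eta(\theta)$ and $a_\Lambda(\theta)$ (which match the paper's exactly), note they are polynomial hence locally Lipschitz, and obtain $b(\theta) = -\bar\eta(\theta)\cdot a(\theta)$ from mass conservation $\int \mathfrak{L}^* p_\theta\,du = 0$. Your added remark identifying the induced ODE in moment coordinates as the Lyapunov system $\dot m = K(\mu-m)$, $\dot\Sigma = -K\Sigma - \Sigma K^\top + A$ is a nice supplementary justification that $\theta(t)$ stays in $\Xi$, but the core argument is the same.
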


\begin{proof}
    For Ito diffusion $dX_t = b(X_t)dt + \sigma(X_t)dW_t$, the backwards and forward generators are given by  \cite{pavliotis2014stochastic} as
    \begin{align*}
        \mathfrak{L}f &= b(x)\cdot \nabla f + \frac{1}{2}\sum_{i,j}^d \left(\sigma\sigma^\top\right)_{ij}\frac{\partial^2}{\partial x_i \partial x_j}f\\
        \mathfrak{L}^*p &= \frac{\partial p}{\partial t} = \nabla\cdot\left(-b(x)p + \frac{1}{2}\nabla\cdot \left(\sigma\sigma^\top\right)(x) p\right)
    \end{align*}

    Let $U_t \in \mathbb{R}^d$ solve Ornstein-Uhlenbeck (OU) SDE:
    \[
    dU_t = K(\mu-U_t)dt + \sigma dW_t,
    \]
    where $\sigma,\mu$ are constants, $W_t$ denote a Wiener process. Then as we have defined in the theorem statement, we have $A = \left[A_{ij}\right]:=\sigma\sigma^\top \geq 0$ is a \textit{constant} and $b(u) = K(\mu - u)$, we obtain
    \begin{align*}
        &\frac{1}{2}\sum_{i,j}^d \left(\sigma\sigma^\top\right)_{ij}\frac{\partial^2}{\partial x_i \partial x_j}f = \frac{1}{2}\sum_{i,j}^d A_{ij}\frac{\partial^2}{\partial x_i \partial x_j}f \\
        & \quad= \frac{1}{2}\sum_{i,j}^d A_{ij}\frac{\partial^2}{\partial x_j \partial x_i}f =  \frac{1}{2}\sum_{i,j}^d A_{ij}H_{ji}f = \frac{1}{2}\operatorname{tr}\left(A\nabla^2 f\right),
    \end{align*}
    where $H = \left[H_{ij}\right]$ denote the Hessian matrix and the second last equality is due to symmetry of second derivatives. Moreover, 
    \begin{align*}
        &\nabla\cdot\nabla\cdot \left(\sigma\sigma^\top p\right) = \sum_{i,j}^d \frac{\partial^2}{\partial x_i \partial x_j} A_{ij} p \\
        & \quad = \sum_{i,j}^d\left[A_{ij} \frac{\partial^2}{\partial x_i \partial x_j} p  + \left(\frac{\partial}{\partial x_i}A_{ij}\right) \frac{\partial}{\partial x_j} p  
        + \left(\frac{\partial}{\partial x_j}A_{ij}\right) \frac{\partial}{\partial x_i} p + \left(\frac{\partial^2}{\partial x_i \partial x_j} A_{ij}\right) p
        \right] \\
        & \quad = \operatorname{tr}\left(A\nabla^2 p \right) + \operatorname{div}A\cdot\nabla p + \operatorname{div}A^\top\cdot\nabla p + \left(\nabla\cdot \nabla\cdot A\right) p \\
        & \quad = \operatorname{tr}\left(A\nabla^2 p \right) + 0 + 0 + 0 = \operatorname{tr}\left(A\nabla^2 p \right).
    \end{align*}
    The last three term vanish since $A$ is a constant. Hence, on the OU SDE, we have    
    \begin{align}
        \mathfrak{L}f &= K(\mu - u)\cdot \nabla f + \frac{1}{2}\operatorname{tr}\left(A\nabla^2f\right) \nonumber\\
        \mathfrak{L}^*p &= \frac{\partial p}{\partial t} = -\nabla \cdot \left(K(\mu - u)p \right)+ \frac{1}{2}\operatorname{tr}\left(A\nabla^2p\right). \label{eqn:OU_L_2}
    \end{align}

    Let $p>0$ be $C^2(\Omega)$ and let $\ell = \log(p)$. Since $\nabla p = p \nabla \ell$, we have $\nabla^2 p  = p\left( \nabla^2\ell  + \left(\nabla\ell\right)\left(\nabla\ell\right)^\top\right)$. Hence, substituting $b:= K(\mu-u)$ for simplicity, dividing $p$ from Equation~\ref{eqn:OU_L_2} and we have
    \begin{align}
    \label{eqn:eqn:OU_L_2_new}
        \frac{\mathfrak{L}^*p}{p} = - \nabla\cdot b - b\cdot \nabla \ell + \frac{1}{2}\operatorname{tr} \left(A\nabla^2 \ell\right) + \frac{1}{2}\left(\nabla\ell\right)^\top A \nabla \ell.
    \end{align}

    By assumption since $E$ is a minimal regular Gaussian exponential family under natural parametrization $\theta = \left(\xi, \Lambda\right) := \left(\xi, \Sigma^{-1}\right)$, $p = p_\theta \in E$ can be expressed as  \cite{jordan2003introduction}[Equation (13.6)] \footnote{Here we replace their notation of $a$ by $-\psi$,  $x$ by $u$, $\Lambda = \Sigma^{-1}$ for consistency with our previous notation. We choose $T(u) = (u,-\frac{1}{2}uu^\top)$ such that it has the form.}
    \[
    p_\theta(u) = \exp\left\{\xi^\top u - \frac{1}{2} u^\top \Lambda u - \psi(\xi,\Lambda)\right\}, \quad T(u) = \left(u, -\frac{1}{2}uu^\top\right).
    \]
    \begin{align*}
        \nabla\ell &= \nabla \log(p_\theta) = \xi - \Lambda u \\
        \nabla^2 \ell &= -\Lambda.
    \end{align*}
    Now we substitute the above and $b(u) = K(\mu-u)$ back into each term of Equation~\eqref{eqn:eqn:OU_L_2_new}. The first term, by construction,
    \begin{align*}
        \nabla\cdot b = \sum_{i=1}^d \frac{\partial}{\partial u_i} b_i(u) = -\sum_{i=1}^d K_{ii} = -\operatorname{tr}(K).
    \end{align*}
    Since $\Lambda = \Sigma^{-1}$ is symmetric. The second term is given by
    \begin{align*}
        -b\cdot \nabla \ell &= - (K\mu-Ku)^\top(\xi - \Lambda u) \\
        &= -\xi^\top K\mu + u^\top \Lambda K\mu + u^\top K^\top \xi - u^\top K^\top \Lambda u.
    \end{align*}
    The third term is
    \begin{align*}
        \frac{1}{2} \operatorname{tr}\left(A\nabla^2 \ell\right) = -\frac{1}{2} \operatorname{tr}\left(A\Lambda\right).
    \end{align*}
    Finally the fourth term becomes
    \begin{align*}
        \frac{1}{2} \left(\nabla\ell\right)^\top A \nabla\ell &= 
         \frac{1}{2}\left(\xi - \Lambda u\right)^\top A \left(\xi - \Lambda u\right) \\
         &=
        \frac{1}{2}\xi^\top A\xi - u^\top \Lambda A \xi + \frac{1}{2} u^\top \Lambda A\Lambda u.
    \end{align*}

    Equation~\eqref{eqn:eqn:OU_L_2_new} thus becomes
    \begin{align*}
        \frac{\mathfrak{L}^*p_\theta}{p_\theta} &= \operatorname{tr}(K) -\xi^\top K\mu + u^\top \Lambda K\mu + u^\top K^\top \xi - u^\top K^\top \Lambda u \\
        & \quad \quad -\frac{1}{2} \operatorname{tr}\left(A\Lambda\right) + 
        \frac{1}{2}\xi^\top A\xi - u^\top \Lambda A \xi + \frac{1}{2} u^\top \Lambda A\Lambda u\\
        &= \left(\operatorname{tr}(K)-\xi^\top K\mu -\frac{1}{2} \operatorname{tr}\left(A\Lambda\right) + \frac{1}{2}\xi^\top A\xi\right) \\
        & \quad \quad + u^\top\left( K^\top \xi + \Lambda K\mu - \Lambda A\xi \right) \\
        & \quad \quad + u^\top \left(\frac{1}{2}\Lambda A\Lambda - K^\top \Lambda\right)u.
    \end{align*}
    The last equality is just regrouping the terms into constant, linear terms and quadratic terms in $u$ respectively. Recall from elementary linear algebra: $u^\top M u = \left\langle \frac{1}{2}\left(M+M^\top\right),uu^\top \right\rangle$. Since both $A,\Lambda$ are symmetric,  the quadratic term becomes
    \begin{align*}
        u^\top \left(\frac{1}{2}\Lambda A\Lambda - K^\top \Lambda\right)u &= 
        \left\langle \frac{1}{2}\left(\left(\frac{1}{2}\Lambda A\Lambda - K^\top \Lambda\right)+\left(\frac{1}{2}\Lambda A\Lambda - K^\top \Lambda\right)^\top\right),uu^\top \right\rangle \\
        &= \left\langle \frac{1}{2}\left(\Lambda A\Lambda - K^\top \Lambda - \Lambda K\right), uu^\top\right\rangle \\
        &= \left\langle K^\top\Lambda + \Lambda K - \Lambda A\Lambda , -\frac{1}{2}uu^\top\right\rangle.
    \end{align*}
    From the above we extract from the linear term and quadratic terms the following
    \begin{align*}
        a_{\xi}(\xi,\Lambda) &= K^\top \xi + \Lambda K\mu - \Lambda A \xi\\
        a_{\Lambda}(\xi,\Lambda) &= K^\top \Lambda + \Lambda K - \Lambda A \Lambda.
    \end{align*}
    Set $b(\xi,\Lambda):= \left(\operatorname{tr}(K)-\xi^\top K\mu -\frac{1}{2} \operatorname{tr}\left(A\Lambda\right) + \frac{1}{2}\xi^\top A\xi\right)$. Then by our choice of sufficient statistics $(T_1(u),T_2(u):= T(u) =  \left(u, -\frac{1}{2}uu^\top\right)$, Equation~\eqref{eqn:eqn:OU_L_2_new} becomes
    \begin{align}
    \label{eqn:OU_L_2_new_2}
        \frac{\mathfrak{L}^*p_\theta}{p_\theta} &= a_{\xi}(\xi,\Lambda)^\top u + \left\langle a_{\Lambda}(\xi,\Lambda), -\frac{1}{2}uu^\top \right\rangle + b(\xi,\Lambda) \nonumber \\
        &= a(\xi,\Lambda) \cdot T(u) + b(\xi,\Lambda)
    \end{align}
    where  $a(\xi,\Lambda) := \left( a_{\xi}(\xi,\Lambda), a_{\Lambda}(\xi,\Lambda)\right)$.

    Note that $a(\xi,\Lambda) = (a_\xi(\xi,\Lambda),a_\Lambda(\xi,\Lambda))$
is polynomial in $(\xi,\Lambda)$, hence $C^1$ on $\Xi$ and therefore locally Lipschitz. Hence for every initial value
$\theta\in\Xi$, the ODE $\dot\theta(t)=a(\theta(t))$, $\theta(0)=\theta$, admits a local solution $\theta(t)\in\Xi$.

 {
For the OU process, the induced parameter curve is in fact global: Let the initial Gaussian law have mean $m_0$ and covariance $\Sigma_0\succ0$, so that
\[
\Lambda_0=\Sigma_0^{-1},\qquad \xi_0=\Lambda_0m_0.
\]
The OU moment equations becomes
\[
m_t=\mu+e^{-Kt}(m_0-\mu), \qquad \Sigma_t
=
e^{-Kt}\Sigma_0e^{-K^\top t}
+
\int_0^t e^{-Kr}A e^{-K^\top r}\,dr .
\]

For every finite \(t\ge0\), the first term is positive definite and the
integral term is positive semidefinite. Since $\Sigma_t\succ0$, so 
\[
\Lambda_t=\Sigma_t^{-1},\qquad \xi_t=\Lambda_t m_t
\]
exist for all \(t\ge0\). Therefore the induced vector field
\(a(\xi,\Lambda)\) is forward complete on $\Xi$ and 
$\theta(t)=(\xi_t,\Lambda_t)\in\Xi$ for all \(t\ge0\).
}

    Since $\partial_t = \mathfrak{L}^*p$, we have that $\frac{d}{dt}\int_{\mathbb{R}^d} p dx = \int_{\mathbb{R}^d} \mathfrak{L}^* pdx = 0$  \cite{pavliotis2014stochastic}. Therefore,
    \begin{align*}
        0= \int_{\mathbb{R}^d}\mathfrak{L}^*p_\theta(u) du = \int_{\mathbb{R}^d}\frac{\mathfrak{L}^*p_\theta(u)}{p_\theta(u)}p_\theta(u) du =  \mathbb{E}_\theta\left[\frac{\mathfrak{L}^*p_\theta(u)}{p_\theta(u)}\right].
    \end{align*}
    Since $a(\xi,\Lambda), b(\xi,\Lambda)$ are constant in $u$, by Equation~\eqref{eqn:OU_L_2_new_2}, 
    \begin{align*}
        0 = \mathbb{E}_\theta \left[a(\xi,\Lambda) \cdot T(u) \right] + b(\xi,\Lambda) = a(\xi,\Lambda)\cdot \mathbb{E}_\theta \left[ T(u) \right] + b(\xi,\Lambda).
    \end{align*}
    Equivalently:
    \begin{align*}
        b(\xi,\Lambda) = - a(\xi,\Lambda)\cdot \mathbb{E}_\theta \left[ T(u) \right]  = - \bar \eta(\theta) \cdot a(\xi,\Lambda).
    \end{align*}
    Therefore Equation~\eqref{eqn:OU_L_2_new_2} becomes
    \begin{align*}
        \frac{\mathfrak{L}^*p_\theta}{p_\theta} = a(\xi,\Lambda) \cdot T(u) - \bar \eta(\theta) \cdot a(\xi,\Lambda).
    \end{align*}
    Under natural parametrization $\theta = \left(\xi, \Lambda\right)$ this is precisely the desired equality. 
 \end{proof}

As an immediate corollary of Theorem~\ref{thm:equivalent} and Theorem~\ref{thm:markov_tangent}, we have the following

\begin{corollary} Ornstein-Uhlenbeck process is Legendre dynamics. In particular, minimal regular exponential family $E$ is preserved under the Markov semigroup of the Ornstein-Uhlenbeck SDE on $\mathbb{R}^d$. 
\end{corollary}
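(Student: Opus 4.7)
The plan is to recognize this corollary as an immediate chaining of the theorem just proved with the structural equivalence of Theorem~\ref{thm:equivalent}. The preceding theorem establishes that, for the generator $\mathfrak{L}$ of the OU semigroup and every natural parameter $\theta=(\eta,\Lambda)\in\Xi$, one has
\[
\frac{\mathfrak{L}^*p_\theta(u)}{p_\theta(u)} \;=\; a(\theta)\cdot T(u)+b(\theta),\qquad b(\theta)=-\eta(\theta)\cdot a(\theta),
\]
with $a(\theta)=(a_\eta(\eta,\Lambda),a_\Lambda(\eta,\Lambda))$ polynomial in $(\eta,\Lambda)$ and hence locally Lipschitz on $\Xi$. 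This is precisely condition $(iii)$ of Theorem~\ref{thm:equivalent}, so the whole argument reduces to invoking the pre-packaged equivalence.

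First I would invoke the implication $(iii)\Rightarrow(i)$ of Theorem~\ref{thm:equivalent}: because $a$ is locally Lipschitz, the ODE $\dot\theta(t)=a(\theta(t))$ with $\theta(0)=\theta$ admits a unique $C^1$ solution $\theta(t)\in\Xi$, and along this curve the ``if'' direction of Theorem~\ref{thm:markov_tangent} supplies $P_t^*p_\theta = p_{\theta(t)}\in E$ for every $t\ge 0$. This already gives the second clause of the corollary: the Markov semigroup of the OU SDE preserves the minimal regular Gaussian exponential family $E$.

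Then I would read this back through Definition~\ref{defn:legendre_semigroup}: the existence, for each $\theta\in\Xi$, of a parameter curve $\theta(t)\in\Xi$ with $P_t^*p_\theta = p_{\theta(t)}\in E$ is by definition the statement that $(P_t)_{t\ge 0}$ is Legendre type on $E$, i.e.\ that the OU process is Legendre dynamics. That yields the first clause, completing the corollary.

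The main obstacle has already been discharged inside the preceding theorem, where the explicit quadratic-in-$u$ expansion of $\mathfrak{L}^*p_\theta/p_\theta$ had to be shown to land in $\operatorname{span}\{1,T_1,T_2\}$ with exactly the Gibbs-type intercept $-\eta(\theta)\cdot a(\theta)$. The only residual points worth stating carefully here are that the OU semigroup is a well-defined Markov semigroup on $C_0(\mathbb{R}^d)$ and that $p_\theta\in\operatorname{Dom}(\mathfrak{L}^*)$ for all $\theta\in\Xi$; both are standard for the OU SDE with constant non-degenerate diffusion \cite{pavliotis2014stochastic}, so no new estimates are required and the corollary is genuinely a one-line consequence once Theorems~\ref{thm:markov_tangent} and \ref{thm:equivalent} are in hand.
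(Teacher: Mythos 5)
Your proposal is correct and follows exactly the route the paper takes: the preceding theorem verifies condition $(iii)$ of Theorem~\ref{thm:equivalent}, and the implication $(iii)\Rightarrow(i)$ (via the ``if'' direction of Theorem~\ref{thm:markov_tangent}) yields $P_t^*p_\theta=p_{\theta(t)}\in E$, which is the definition of Legendre dynamics in semigroup form. The paper's own proof is a one-line citation of those two theorems, so your slightly more explicit unwinding (including the remark about well-definedness of the OU semigroup and the domain of $\mathfrak{L}^*$) matches and, if anything, fills in detail the paper leaves implicit.
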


\begin{proof}
    This is an immediate corollary of Theorem~\ref{thm:equivalent} and Theorem~\ref{thm:markov_tangent} in Appendix~\ref{app:OU_dynamics}.
\end{proof}

\clearpage

\newcommand{\etalchar}[1]{$^{#1}$}

\end{document}